\newcommand{\E}{\mathbb{E}}
\newcommand{\R}{\mathbb{R}}
\newcommand{\N}{\mathbb{N}}
\newcommand{\1}{\mathds{1}}
\newcommand{\x}{\mathbf{x}}
\newcommand{\y}{\mathbf{y}}
\newcommand{\e}{\mathbf{e}}
\newcommand{\cT}{\mathcal{T}}
\newcommand{\cX}{\mathcal{X}}
\newcommand{\proj}{\mathrm{proj}}
\newcommand{\ip}[1] {\left\langle #1 \right\rangle }
\newcommand{\bp}{\breve{p}}
\newcommand{\cF}{\mathcal{F}}
\DeclareMathOperator*{\argmin}{arg\,min}
\newtheorem{definition}{Definition}
\newtheorem{theorem}{Theorem}
\newtheorem{lemma}{Lemma}
\title{Stochastic Coordinate Minimization with Progressive Precision for Stochastic Convex Optimization}
\author[$\ddagger$]{Sudeep Salgia}
\author[$\ddagger$]{Qing Zhao}
\author[*]{Sattar Vakili}
\affil[$\ddagger$]{School of Electrical \& Computer Engineering, Cornell University, Ithaca, NY, \emph{\{ss3827,qz16\}@cornell.edu} }
\affil[*]{Prowler.io, Cambridge, UK, \emph{sattar@prowler.io}}
\date{}
\begin{document}

\maketitle

\begin{abstract}
A framework based on iterative coordinate minimization (CM) is developed for stochastic convex optimization.
Given that exact coordinate minimization is impossible due to the unknown stochastic nature of the objective function, the crux of the proposed optimization algorithm is an optimal control of the minimization precision in each iteration. 
We establish the optimal precision control and the resulting order-optimal regret performance for strongly convex and separably nonsmooth functions. 
An interesting finding is that the optimal progression of precision across iterations is independent of the low-dimensional CM routine employed, suggesting a general framework for extending low-dimensional optimization routines to high-dimensional problems. The proposed algorithm is amenable to online implementation 
and inherits the scalability and parallelizability  properties of CM for large-scale optimization. Requiring only a sublinear order of message exchanges,  it also lends itself well to distributed computing as compared with the alternative approach of coordinate gradient descent.

\end{abstract}

\section{Introduction} 
\label{sec:introduction}

\subsection{Stochastic Convex Optimization}

Stochastic convex optimization aims at minimizing a random loss function $F(\x;\xi)$ in expectation:
\begin{equation}
    f(\x) = \E_\xi \left[ F(\x;\xi)\right],
    \label{eq:SCO}
\end{equation}
where $\x$ is the decision variable in a convex and compact set $ \mathcal{X}\subset \R^d$ and $\xi$ an endogenous random vector. The probabilistic model of $\xi$ is unknown, or even when it is known, the expectation of $F(\x;\xi)$ over $\xi$ cannot be analytically characterized. As a result, the objective function $f(\x)$ is unknown. \\

With the objective function unknown, the decision maker can only take a trial-and-error learning approach by choosing, sequentially in time, a sequence of query points $\{\x_t\}_{t=1}^T$ with the hope that the decisions improve over time. Various error feedback models have been considered. The zeroth-order vs. first-order feedback pertains to whether the random loss $F(\x_t;\xi_t)$ or its gradient $G(\x_t;\xi_t)$ at each query point $\x_t$ is used in the learning algorithm. The full-information vs. bandit feedback relates to whether the entire loss function $F(\x;\xi_t)$ over all $\x$ or only the random loss or gradient at the queried point $\x_t$ is revealed at each time. \\

The performance measure has traditionally focused on the convergence of $\x_T$ to the minimizer $\displaystyle \x^*=\arg\min_{\x\in\mathcal{X}}f(\x)$ or $f(\x_T)$ to $f(\x^*)$. In an online setting, a more suitable
performance measure is the cumulative regret defined as the expected cumulative loss at the query points in excess to the minimum loss 
\begin{align}
    R(T) = \E\left[\sum_{t=1}^T ( F(\x_t;\xi_t)-f(\x^*)) \right]
\end{align}
This performance measure gives rise to the exploration-exploitation tradeoff: the need to explore the entire domain $\mathcal{X}$ for the sake of future decisions and the desire to exploit the currently best decision indicated by past observations to reduce present loss.  Regret $R(T)$ 
is a finer measure than the convergence of the final decision $\x_T$.
A learning algorithm with a sublinear regret order in $T$ implies
the convergence of $f(\x_T )$ to $f(\x^*)$, and the specific order measures the rate of convergence. \\

The archetypal statistical learning problem of classification based on random instances is a stochastic optimization problem, where the decision variable $\x$ is the classifier and $\xi$ the random instance consisting of its feature vector and hidden label. The probabilistic dependence between feature and label is unknown. Another example is the design of large-scale complex physical systems that defy analytical modeling. Noisy observations via stochastic simulation is all that is available for decision making. 

\subsection{From SGD to SCD}
\label{subsec:sgd_to_scd}

Stochastic convex optimization was pioneered by Robbins and Monro in 1951~\cite{Robbins1951}, who studied the problem of approximating the root of a monotone function
$g(\x)$ based on successive observations of noisy function
values at chosen query points. The problem was originally referred to as stochastic approximation, later also known as stochastic root
finding~\cite{Pasupathy2011}. Its equivalence to the first-order
stochastic convex optimization is immediate when $g(\x)$
is viewed as the gradient of a convex function $f(\x)$ to be minimized. The stochastic gradient descent (SGD) approach developed
by Robbins and Monro~\cite{Robbins1951} has long become a classic and
is widely used. The basic idea of SGD is to choose the next
query point $\x_{t+1}$ in the opposite direction of the observed
gradient while ensuring $\x_{t+1}\in \mathcal{X}$ via a projection operation. Numerous variants of SGD with improved performance have since been developed and their performance analyzed under various measures (See~\cite{Ruder2016, Bottou2018} for recent surveys). \\

The high cost in computing full gradients in large-scale high-dimensional problems and the resistance of SGD to parallel and distributed implementation have prompted the search for alternative approaches that enjoy better scalability and parallelizability. \\

A natural choice is iterative coordinate minimization (CM)  that has been widely used and analyzed for optimizing a known deterministic function~\cite{Wright2015}. Also known as alternating minimization, CM is rooted in the methodology of decomposing high-dimensional problems into a sequence of simpler low-dimensional ones. Specifically, CM-based algorithms approach the global minimizer by moving successively to the minimizer in each coordinate\footnote{We use the term ``coordinate'' to also refer to a block of coordinates.} while keeping other coordinates fixed to their most recent values. For known deterministic objective functions, it is often assumed that the minimizer in each coordinate can be computed, and hence attained in each iteration. \\

When coordinate-wise minimization is difficult to carry out, coordinate (gradient) descent (CD) can be employed, which takes a single step (or a fixed number of steps) of (gradient) descent along one coordinate and then moves to the next coordinate\footnote{The term coordinate descent is often used to include coordinate minimization. We make an explicit distinction between CD and CM in this paper. The former refers to taking a single step (or a pre-fixed number steps) of (gradient) descent along one coordinate and then move to another coordinate. The latter moves along each coordinate with the specific goal of arriving at the minimizer (or a small neighborhood) in this coordinate before switching to another coordinate.}. For quadratic objective functions, CD with properly chosen step sizes essentially carries out coordinate minimization. For general objective functions, however, it is commonly observed that CM outperforms CD~\cite{Wright2015, Beck2013, Tibshirani2013}. \\

While CD/CM-based algorithms have been extensively studied for optimizing deterministic functions, their extensions and resulting performance for stochastic optimization are much less explored. CD can be applied to stochastic optimization with little modification. Since the noisy partial gradient along a coordinate can be viewed as an estimate of the full gradient, stochastic coordinate descent (SCD) has little conceptual difference from SGD.  In particular, when the coordinate is chosen uniformly at random at each time, the noisy partial gradient along the randomly chosen coordinate is an unbiased estimate of the full gradient. All analyses of the performance of SGD directly apply. More sophisticated extensions of CD-based methods have been developed in a couple of recent studies (see Sec.~\ref{subsec:related_work}).   \\

Since exact minimization along a coordinate is impossible due to the unknown and stochastic nature of the objective function, the extension of CM to stochastic optimization is much less clear.  This appears to be a direction that has not been taken in the literature and is the focus of this work. 

\subsection{Main Results}

While both CD- and CM-based methods enjoy scalability and parallelizability, CM often offers better empirical performance and has a much lower overhead in message exchange in distributed computing (due to its sublinear order of switching across coordinates in comparison to the linear order in CD). It is thus desirable to extend these advantages of CM to stochastic optimization. \\

In this paper, we study stochastic coordinate minimization for stochastic convex optimization. We develop a general framework for extending any given low-dimensional optimization algorithm to high-dimensional problems while preserving its level of consistency and regret order. Given that exact minimization along coordinates is impossible, the crux of the proposed framework---referred to as Progressive Coordinate Minimization (PCM)---is an optimal control of the minimization precision in each iteration. Specifically, a PCM algorithm is given by a tuple $\left( \{\epsilon_k\}, \upsilon, \tau \right)$, where $\{\epsilon_k\}_{k \in \N}$ governs the progressive precision of each CM iteration indexed by $k$, $\upsilon$ is an arbitrary low-dimension optimization routine employed for coordinate minimization, and $\tau$ is the self-termination rule for stopping $\upsilon$ at the given precision $\epsilon_k$ in each iteration $k$. We establish the optimal precision control and the resulting order-optimal regret performance for strongly convex and separably non-smooth functions. An interesting finding is that the optimal progression of precision across iterations is independent of the low-dimension routine $\upsilon$, suggesting the generality of the framework for extending low-dimension optimization algorithms to high-dimensional problems. \\

We also illustrate the construction of order-optimal termination rules for two specific optimization routines: SGD (applied to minimize along a coordinate) and RWT (recently proposed in~\cite{Vakili2019a, Vakili2019b}). While SGD is directly applicable to high-dimensional problems, its extension within the PCM framework leads to a marriage between the efficiency of SGD with the scalability and parallelizability  of CM. RWT as proposed in~\cite{Vakili2019a, Vakili2019b} is only applicable to one-dimensional problems. With no hyper-parameters to tune, however, it has an edge over SGD in terms of robustness and self-adaptivity to unknown function characteristics. For both low-dimensional routines, we demonstrate their high-dimensional extensions within the PCM framework.  Empirical experiments using the MNIST dataset show superior performance of PCM over SCD, which echoes the comparison between CM and CD in deterministic settings.

\subsection{Related Work}
\label{subsec:related_work}

CD/CM-based methods for optimizing a known deterministic function have a long history. While such methods had often been eclipsed by more high-performing algorithms, they have started to enjoy increasing popularity in recent years due to the shifted needs from high accuracy to low cost, scalability, and parallelizability in modern machine learning and data analytics applications~\cite{Hsieh2008a, Hsieh2008b, Nesterov2012, Nesterov2014, Richtarik2016a, Richtarik2016b}.   \cite{Wright2015, Fercoq2019} provide a rather detailed literature survey with insights on the development of CD/CM methods over the years. \\

Early studies on the convergence of CM-based approaches include \cite{Luo1992, Tseng2001, Tseng2008, Tseng2009, Saha2013}. CD-based methods have proven to be easier to analyze, especially under the setup of randomized selection of coordinates \cite{Nesterov2012, Leventhal2010, Tewari2011, Tao2012, Deng2013, ShalevShwartz14, Csiba2015, Karimi2016, Salehi2018}. Such CD/CM-based algorithms are often referred to as stochastic CD/CM in the literature due to the randomly chosen coordinates. Optimizing a known deterministic function, however, they are fundamentally different from the stochastic optimization algorithms considered in this work. The term CD/CM with random coordinate selection as used in~\cite{Richtarik2014, Lu2015} gives a more accurate description. \\

CD-based methods have been extended to mini batch settings or for general stochastic optimization problems \cite{Razaviyayn2013, Wang2014, Zhao2014, Dang2015, Reddi2015, Xu2015, Zhang2016, Konecny2016}. In particular, \cite{Dang2015} extended block mirror descent to stochastic optimization. Relying on an averaging of decision points over the entire horizon to combat stochasticity, this algorithm is not applicable to online settings and does not seem to render tractable regret analysis. \cite{Wang2014} gave an online implementation of SCD, which we compare with in Sec.~\ref{sec:simulations}.  \\

The progressive precision control in the framework developed in this work bears similarity with inexact coordinate minimization that has been studied in the deterministic setting (see, for example, \cite{Deng2013, Razaviyayn2013, Grippo1999, Tappenden2016}).
The motivation for inexact minimization in these studies is to reduce the complexity of the one-dimensional optimization problem, which is fundamentally different from the root cause arising from the unknown and stochastic nature of the objective function. The techniques involved hence are inherently different with different design criteria.

\section{Problem Formulation} 
\label{sec:problem_formulation}

We consider first-order stochastic convex optimization with bandit feedback. The objective function $f(\x)$ over a convex and compact set $\cX\subset \R^d$ is unknown and stochastic as given in~\eqref{eq:SCO}. Let $g(\x) \equiv \nabla f(\x)$ be the (sub)gradient of $f(\x)$. Let $G(\x; \xi)$ denote unbiased gradient estimates satisfying $\E_\xi [G(\x; \xi)]=g(\x)$. Let $g_i(\x)$ (similarly, $G_i(\x; \xi)$) denote the partial (random) gradient along the $i$-th coordinate  ($i=1,\ldots, d$). Let $\x_i$ and $\x_{-i}$ denote, respectively, the $i$-th element and the $(d-1)$ elements other than the $i$-th element of $\x$. We point out that while we focus on coordinate-wise decomposition of the function domain, extension to a general block structure is straightforward.  

\subsection{The Objective Function and the Noise}

We consider objective functions that are convex and possibly non-smooth with the following composite form:
\begin{equation}
f(\x) = \psi(\x) + \phi(\x),    
\end{equation}
where $\phi(\x)$ is a coordinate-wise separable convex function (possibly non-smooth) of the form $\phi(\x) = \sum_{ i =1}^{d} \phi_i(\x_i)$ for some one-dimensional functions $\{\phi_i(x),\, x\in \cX_i\}_{i=1}^d$ and $\psi$ is $\alpha$-strongly convex and $\beta$-smooth. More specifically, for all $\x, \y \in \cX$ 
	\begin{gather}
	 	\psi(\y) \geq \psi(\x) + \ip{\nabla \psi(\x), \y - \x} + \frac{\alpha}{2} \|\y - \x\|^2_2 \\
	 	\|\nabla\psi(\x) - \nabla\psi(\y)\|  \leq \beta \| \x - \y \|_2 
	 \end{gather} 
Let $\cF_{\alpha, \beta}$ denote the set of all such functions. \\

The above composite form of the objective function has been widely adopted in the literature on CM and CD~\cite{Wright2015}. The separably non-smooth component $\phi$ arises naturally in many machine learning problems that often involve separable regularization such as $\ell_1$ norm and box constraints. \\   

Next we specify the probabilistic model of the noisy partial gradient estimates given by the distributions of the zero-mean random variables $\{G_i(\x;\xi)-g_i(\x)\}_{i=1}^d$. The distribution of  $G_i(\x;\xi)-g_i(\x)$ is said to be sub-Gaussian if its moment generating function is upper bounded by that of a Gaussian with variance $\sigma_i^2$. This implies that for all $s \in \R$ and $i = 1,2, \dots d$, we have
\begin{align}
    \E_{\xi} \left [ \exp( s [G_i(\x; \xi) - g_i(\x; \xi)])\right] \leq \exp \left(\frac{s^2\sigma_i^2}{2}\right)
\end{align}

We also consider heavy-tailed distributions where the $b^{\text{th}}$ raw moment is assumed to be bounded for some $b \in (1, 2)$. Note that this includes distributions with unbounded variance.

\subsection{Consistency and Efficiency Measures}

At each time $t$, the decision maker chooses a query point $\x_t$ and a coordinate $i_t$. Subsequently,  an immediate loss $F(\x_t;\xi_t)$ is incurred, and a random gradient along the $i_t^{\text{th}}$ coordinate is observed. An optimization algorithm $\Upsilon=\{\Upsilon_t\}_{t=1}^T$ is a sequence of  mappings from past actions and observations to the next choice of query point and coordinate. The performance of $\Upsilon$ is measured by the cumulative regret defined as
\begin{align}
    {R}_{\Upsilon}(T) = \E \left[ \sum_{t = 1}^T F(\x_t, \xi_t) - F(\x^*, \xi_t)\right]
\end{align}
where the expectation is with respect to the random process of the query points and gradient observations induced by the algorithm $\Upsilon$ under the i.i.d. endogenous process of $\{\xi_t\}_{t=1}^T$. \\

In general, the performance of an algorithm depends on the underlying unknown objective function $f$ (a dependency omitted in the regret notation for simplicity). Consider, for example, an algorithm that simply chooses one function in $\cF_{\alpha, \beta}$ and sets its query points $\x_t$ to the minimizer of this function for all $t$ would perform perfectly for the chosen function but suffers a linear regret order for all objective functions with sufficient deviation from the chosen one. It goes without saying that such heavily biased algorithms that completely forgo learning are of little interest. \\

We are interested in algorithms that offer good performance for all functions in $\cF_{\alpha, \beta}$. An algorithm $\Upsilon$ is \emph{consistent} if for all $f\in \cF_{\alpha, \beta}$, the end point $\x_T$ produced by $\Upsilon$ satisfies 
\begin{equation}
    \lim_{T\rightarrow \infty} \E[f(\x_T)] = f(\x^*).
\end{equation}
A consistent algorithm offers a sublinear regret order. This is also known as Hannan consistency or no-regret learning \cite{Hannan1957}. The latter term makes explicit the diminishing behavior of the average regret per action. \\

To measure the convergence rate of an algorithm, we introduce the concept of \emph{$p$-consistency}. For a parameter $p\in (0,1)$, we say $\Upsilon$ is \emph{$p$-consistent} if
\begin{equation}
\sup_{f\in \cF_{\alpha, \beta}} \left(\E[f(\x_T)] - f(\x^*)\right) \sim \Theta(T^{-p}). \label{eq:p_consistency_def}
\end{equation}
A $p$-consistent algorithm offers an $O(T^{1-p})$ regret order for all  $f\in \cF_{\alpha, \beta}$. The parameter $p$ measures the convergence rate. \\

An \emph{efficient} algorithm is one that achieves the optimal convergence rate, hence lowest regret order. Specifically, $\Upsilon$ is \emph{efficient} if for all initial query points $\x_1\in \cX$, the end point $\x_T$ produced by $\Upsilon$ satisfies, for some $\lambda >0$,   
    \begin{align}
		  \sup_{f \in \cF_{\alpha, \beta}} \left(\E[f(\x_T)] - f(\x^*)\right) \sim  (f(\x_1) - f(\x^*))^{\lambda} \Theta(T^{-1}).   \label{eq:eff_algo_def}
	\end{align}
An efficient algorithm offers the optimal $\log T$ regret order for all  $f\in \cF_{\alpha, \beta}$. In addition, it is able to leverage favorable initial conditions when they occur. We note here that the specific value of $\lambda$ affects only the leading constant, but not the regret order. Hence for simplicity, we often use $p$-consistency with $p=1$ to refer to efficient algorithms.

\section{Progressive Coordinate Minimization} 
\label{sec:the_basic_structure_of_pcm}

In this section, we present the PCM framework for extending low-dimensional optimization routines to high-dimensional problems. After specifying the general structure of PCM, we lay out the optimality criteria for designing its constituent components. 

\subsection{The General Structure of PCM}

Within the PCM framework, an algorithm is given by a tuple $\Upsilon\left( \{\epsilon_k\}, \upsilon, \tau \right)$, where $\{\epsilon_k\}_{k \in \N}$ governs the progressive precision of each CM iteration indexed by $k$, $\upsilon$ is the low-dimension optimization routine employed for coordinate minimization, and $\tau$ is the self-termination rule (i.e., a stopping time) for stopping $\upsilon$ at the given precision $\epsilon_k$ in each iteration $k$. Let $\tau(\epsilon)$ denote the (random) stopping time for achieving $\epsilon$-precision under the termination rule $\tau$. \\

A PCM algorithm $\Upsilon\left( \{\epsilon_k\}, \upsilon, \tau \right)$ operates as follows. At $t=1$, an initial query point $\x^{(1)}$ and coordinate $i_1$ are chosen at random. The CM routine $\upsilon$ is then carried out along coordinate $i_1$ with all other coordinates fixed at $\x_{-i_1}^{(1)}$. At time $\tau(\epsilon_1)$, the first CM iteration ends and returns its last query point $\x_{\tau(\epsilon_1),\, i_1}$. The second iteration starts along a coordinate $i_2$ chosen uniformly at random and with the $i_1$ coordinate updated to its new value $\x_{\tau(\epsilon_1),\, i_1}$. The process repeats until the end of horizon $T$ (see Algorithm~1 below). 

\begin{algorithm}
	\caption{PCM $\Upsilon\left( \{\epsilon_k\}, \upsilon, \tau \right)$}
	\label{alg1}
	\begin{algorithmic}
		\STATE {\bfseries Input:} initial point $\x^{(1)}$.
		\STATE Set $k \leftarrow 1$, $t \leftarrow 1$
	    \REPEAT
	    \STATE Choose coordinate $i_{k}$ uniformly at random. 
	    \STATE Carry out $\upsilon$ along the direction $i_{k}$ as follows:
	    \STATE \hspace{1em} Set the initial point to $\x^{(k)}_{i_{k}}$ with fixed $\x^{(k)}_{-i_{k}}$.
	    \STATE \hspace{1em} Continue until $\tau(\epsilon_{k})$.
	    \STATE \hspace{1em} Return the final point $\x_{\tau(\epsilon_{k}), i_{k}}$.
	    \STATE $\x^{(k+1)} \leftarrow \bigg(\x_{\tau(\epsilon_{k}), i_{k}},\, \x^{(k)}_{-i_{k}}\bigg)$
	    \STATE $ k \leftarrow k + 1$
	    \STATE $t \leftarrow t + {\tau}(\epsilon_{k})$
	    \UNTIL{$t = T$}
	\end{algorithmic}
\end{algorithm}

\subsection{Optimal Design of Constituent Components}

PCM presents a framework for extending low-dimensional optimization algorithms to high-dimensional problems. The CM routine $\upsilon$ in a PCM algorithm is thus given, and we allow it to be an arbitrary $p$-consistent algorithm for any $p\in (0,1]$ (note that the definitions of $p$-consistency and efficiency in Sec.~\ref{sec:problem_formulation} apply to arbitrary dimension.) Allowing arbitrary low-dimensional routines make PCM generally applicable, and the inclusion of consistent but not efficient (i.e., $p<1$) routines responds to the shifted needs for low-cost solutions of only modest accuracy, as seen in modern machine learning and data analytics applications.   \\ 

It is readily seen that for every $f(\x) \in \cF_{\alpha, \beta}$, its low-dimension restriction $f(\cdot, \x_{-i})$ for arbitrarily fixed $\x_{-i}$ belongs in $\cF_{\alpha, \beta}$. Consequently, for a given low-dimensional routine $\upsilon$ with a certain consistency/efficiency level $p\in (0, 1]$ (which needs to hold for all low-dimensional restrictions in $\cF_{\alpha, \beta}$; see~\eqref{eq:p_consistency_def},~\eqref{eq:eff_algo_def}), its high-dimensional extension cannot have a better consistency level (or equivalently, lower regret order). The best possible outcome is that the high-dimensional extension preserves the $p$-consistency and the regret order of the low-dimensional algorithm for high-dimensional problems. \\

The design objective of PCM is thus to choose $\{\epsilon_k\}$ and $\tau$ for a given low-dimensional $p$-consistent algorithm $\upsilon$ such that the resulting high-dimensional algorithm preserves the regret order of $\upsilon$. \\

The above optimization can be decoupled into two steps. First, the termination rule $\tau$ is designed to meet an order-optimal criterion as specified below. The optimal design of $\tau$ is specific to the routine $\upsilon$, as one would expect. In the second step, the progression of precision $\{\epsilon_k\}$ is optimized for the given $\upsilon$ augmented with the order-optimal $\tau$ to preserve the $p$-consistency. Quite surprisingly, as shown in Sec.~\ref{sec:optimal_precision_control}, there exists a \emph{universal} optimal $\{\epsilon_k\}$ that is independent of not only the specific routine $\upsilon$ but also the specific consistency value $p\in (0, 1]$. \\

\begin{definition}
 For a given $p$-consistent ($p\in (0, 1]$) low-dimensional algorithm $\upsilon$ and a given $\epsilon >0$, let $\tau(\epsilon)$ denote a stopping time over the random process of $\{x_t\}_{t\ge 1}$ induced by $\upsilon$ that satisfies $\E[f(x_{\tau(\epsilon)})]-f(x^*) \le \epsilon$. A  termination rule $\tau$ is order optimal if for all $\epsilon > 0$, we have
 \begin{equation}
   \sup_{f\in \cF_{\alpha, \beta}}   \E[\tau(\epsilon)] \sim \Theta(\epsilon^{-1/p}).  
 \end{equation}
 \label{def:tau_optimal}
 \end{definition}
Note that the above definition is for the dimensionality as determined by the given algorithm $\upsilon$ with $f$ and $\cF_{\alpha, \beta}$  defined accordingly. \\

An order-optimal termination rule is one that realizes the exponent $p$ of the consistency of the underlying algorithm $\upsilon$. The design of such termination rules is specific to $\upsilon$, which we illustrate in Sec.~\ref{sec:termination_conditions} for two representative efficient low-dimensional routines.

\section{The Optimal Precision Control} 
\label{sec:optimal_precision_control}

The theorem below establishes the optimal design of $\{\epsilon_k\}$ for arbitrary $p$-consistent low-dimensional routines.
\begin{theorem}
Let $\upsilon$ be an arbitrary $p$-consistent ($p\in (0, 1]$) low-dimensional routine and $\tau$ an order-optimal termination rule. For all $\gamma \in [(1 - \alpha/(d\beta))^{1/2}, 1)$ and $\epsilon_0 > 0$, the 
PCM algorithm $\Upsilon \left(\{\epsilon_0 \gamma^k\}, \upsilon, \tau\right)$ achieves a regret of $O(T^{1-p} \log^p T)$ for all $f\in\cF_{\alpha,\beta}$. 
    \label{thm: PCM_p_consistent}
\end{theorem}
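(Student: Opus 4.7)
The plan is to establish a per-iteration contraction on the expected suboptimality gap $\Delta_k := \E[f(\x^{(k)}) - f(\x^*)]$, solve the resulting scalar recurrence, and then aggregate the regret contributions across outer CM iterations. For the contraction, I would fix $\x^{(k)}$ and argue over the uniform random draw of the coordinate $i_k$: $\beta$-smoothness of $\psi$ together with a proximal treatment of the non-smooth separable part $\phi_{i_k}$ produces the standard random-coordinate descent inequality
\[
\E_{i_k}\bigl[f_k^*\bigr] - f(\x^*) \;\le\; q\,\bigl(f(\x^{(k)}) - f(\x^*)\bigr), \qquad q := 1 - \tfrac{\alpha}{d\beta},
\]
where $f_k^* := \min_{x} f(x,\x_{-i_k}^{(k)})$, and $\alpha$-strong convexity of $\psi$ is used to bound $\|\nabla \psi(\x^{(k)})\|^2 \ge 2\alpha(f(\x^{(k)}) - f(\x^*))$. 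Composing this with the termination guarantee $\E[f(\x^{(k+1)}) - f_k^*] \le \epsilon_k$ and taking total expectation gives the recurrence $\Delta_{k+1} \le q\,\Delta_k + \epsilon_k$. Because $\gamma \ge \sqrt{q}$ implies $\gamma > q$ (as $q<1$), a direct induction with $\epsilon_k = \epsilon_0\gamma^k$ yields $\Delta_k \le C_\gamma\,\gamma^k$, with $C_\gamma$ depending on $\gamma - q$, $\epsilon_0$, and the initial gap $\Delta_1$.

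Armed with the contraction, I would bound the regret $R_k$ incurred during outer iteration $k$. Order-optimality of $\tau$ gives $\E[\tau(\epsilon_k)] = \Theta(\epsilon_k^{-1/p}) = \Theta(\gamma^{-k/p})$. Decomposing the per-query gap into (deviation from the 1D minimum) plus (1D minimum from the global optimum) yields
\[
R_k \;\le\; \underbrace{\sum_{j=1}^{\tau(\epsilon_k)} \E\bigl[f(\x^{(k,j)}) - f_k^*\bigr]}_{\text{1D cumulative regret of } \upsilon} \;+\; \tau(\epsilon_k)\,\E\bigl[f_k^* - f(\x^*)\bigr].
\]
The second term is $O(\tau(\epsilon_k)\,\Delta_k) = O(\gamma^{-k(1-p)/p})$ by the contraction. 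For the first term, $p$-consistency of $\upsilon$ on each 1D restriction of $f$ (which remains in $\cF_{\alpha,\beta}$) produces a per-iterate decay of order $j^{-p}$, and hence the same order $O(\tau(\epsilon_k)^{1-p}) = O(\gamma^{-k(1-p)/p})$ for the cumulative 1D regret. The horizon constraint $\sum_k \tau(\epsilon_k) = T$ forces $K = \Theta(\log T)$ outer iterations, and the sum $\sum_{k=1}^{K} \gamma^{-k(1-p)/p}$ is a geometric progression dominated by its last term $\gamma^{-K(1-p)/p} = \Theta(T^{1-p})$ for $p<1$; the $\log^p T$ factor in the stated bound absorbs the boundary case $p=1$, where each $R_k = O(1)$ accumulates over $K$ iterations to $O(\log T)$.

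The main obstacle I anticipate is the first step: rigorously establishing the contraction in the composite $\psi + \phi$ setting requires a proximal analysis of the 1D subproblem rather than a pure quadratic upper bound, since the non-smooth separable regularizer distorts the shape of the per-coordinate minimizer. A secondary subtlety is that $p$-consistency as defined bounds only the final iterate of $\upsilon$; converting this into the cumulative 1D regret bound needed for the first term of $R_k$ either requires a monotonicity property of $\upsilon$ along its trajectory or an explicit anytime decay rate of the iterates, which in turn may have to be built into the definition or verification of the termination rule $\tau$.
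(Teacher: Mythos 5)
Your overall plan coincides with the paper's proof: the same decomposition of the per-iteration regret into the one-dimensional cumulative regret of $\upsilon$ and the excess $\tau(\epsilon_k)\,\E[f_k^*-f(\x^*)]$, the same contraction recurrence $\Delta_{k+1}\le(1-\alpha/(d\beta))\Delta_k+\epsilon_k$ established through a proximal treatment of the composite objective (the paper invokes the proximal-PL inequality of Karimi et al.), and the same horizon-based count $\E[K]=O(\log T)$. The two subtleties you flag at the end are precisely the delicate points of the paper's argument; in particular, the cumulative one-dimensional regret is bounded there by assuming an anytime decay of order $s^{-p_k}$ along the trajectory of $\upsilon$, which the paper also uses without separate justification.

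There is, however, one genuine gap in your aggregation over $k$. You treat $\E[\tau(\epsilon_k)]$ as exactly $\Theta(\gamma^{-k/p})$ and conclude that $\sum_{k\le K}\gamma^{-k(1-p)/p}$ is dominated by its last term $\gamma^{-K(1-p)/p}=\Theta(T^{1-p})$. But the order-optimality of $\tau$ (Definition~1) only pins down the stopping time for the worst-case restriction; on the particular coordinate restrictions actually encountered the routine may converge at a faster rate $p_k>p$, so $\E[\tau(\epsilon_k)]$ can be as small as $\Theta(\epsilon_k^{-1})$. In that case the horizon constraint $\sum_k\E[\tau(\epsilon_k)]\le T$ permits $K$ up to $\log_{1/\gamma}T$ rather than $p\log_{1/\gamma}T$, and your uniform bound $\gamma^{-K(1-p)/p}$ degrades to $T^{(1-p)/p}$, which is superlinear for $p<1/2$. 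The paper's resolution (and the source of the $\log^p T$ factor for all $p<1$, not only the boundary case $p=1$) is to keep the per-iteration exponents $p_k\ge p$ explicit, bound each regret term by $(1/\gamma)^{k(1-p)/p_k}$, and apply Jensen's inequality to the concave power $u\mapsto u^{1-p}$ against the constraint $\sum_k(1/\gamma)^{k/p_k}\le cT$; this yields $\sum_k(1/\gamma)^{k(1-p)/p_k}\le c'\,T^{1-p}K^p$ and hence $O(T^{1-p}\log^p T)$ after invoking $\E[K]=O(\log T)$. Your argument goes through verbatim only under the additional assumption that every coordinate restriction realizes the worst-case rate $p_k=p$.
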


Theorem~\ref{thm: PCM_p_consistent} shows that setting $\epsilon_k = \epsilon_0 \gamma^k$ preserves the regret order\footnote{The preservation of the regret order is exact for efficient routines. For consistent but not efficient (i.e., $p<1$) routines, the preservation is up to a poly-$\log$ term which is dominated by the term of $T^{1-p}$.}    of $\upsilon$. It is thus optimal. Such a choice of $\{\epsilon_k\}$ is independent of $\upsilon$ as well as the consistency level $p$ of $\upsilon$, suggesting a general framework for extending low-dimensional optimization routines to high-dimensional problems. \\

The proof of Theorem~\ref{thm: PCM_p_consistent} is based on a decomposition of the regret as given below.  Let $K$ denote the (random) number of iterations until time $T$.  Let $\x_{i_k}^* = \argmin_{x} f((\x^{(k-1)}_{-i_k}, x))$ be the minimizer in the $i_k^{\text{th}}$ coordinate with other coordinates fixed to $\x^{(k-1)}_{-i_k}$ (i.e., values from the previous iteration). Let $\x^*_{(i_k, \x^{(k-1)})} = (\x^{(k-1)}_{-i_k}, \x_{i_k}^*)$. Let $t_k = t_{k-1} + {\tau}_{\upsilon}(\epsilon_k)$ with $t_0 = 0$ denote the (random) time instants that mark the end of each iteration. We then have
\begin{align}
	 {R}_{\Upsilon}(T) & = \E \left[ \sum_{t = 1}^T F(\x_{t}, \xi_t) - F(\x^*, \xi_t) \right] \nonumber \\
	& = \E \left[ \sum_{k = 1}^K \sum_{t = t_{k-1}+1}^{t_k} F(\x_{t}, \xi_t) - F(\x^*, \xi_t) \right]. \nonumber 
\end{align}
This can be split into two terms using the local minimizer as
\begin{align}
	 {R}_{\Upsilon}(T)  & =   \underbrace{\E \left[ \sum_{k = 1}^K   \sum_{t = t_{k-1}+1}^{t_k} \left[ F(\x_{t}, \xi_t) - F(\x^*_{(i_k, \x^{(k-1)})}) \right] \right]}_{R_1} \nonumber \\
	&   \ \ \ \ \  + \underbrace{\E \left[ \sum_{k = 1}^K \sum_{t = t_{k-1}+1}^{t_k} \left[ F(\x^*_{(i_k, \x^{(k-1)})}) - F(\x^*) \right] \right]}_{R_2}. \label{eq: regret_decomposition}
\end{align}
The first term $R_1$ corresponds to the regret incurred by the low-dimensional routine $\upsilon$ carried out along one dimension. Note that this regret is computed with respect to the one-dimensional local minima $\x^*_{(i_k, \x^{(k-1)})}$. The second term $R_2$ corresponds to the loss incurred at the one-dimensional local minima in excess to the global minimum $\x^*$. \\

The above regret decomposition also provides insight into the optimal design of $\{\epsilon_k\}$. To achieve a low regret order, it is desirable to equalize the orders of $R_1$ and $R_2$. If a more aggressive choice of $\epsilon_k$ is used, then the rate of decay of the CM iterates is unable to compensate for the time required for higher accuracy, resulting in $R_2$ dominating $R_1$. On the other hand, a more conservative choice will lead to a slower decay in objective function with an increased number of CM iterations. This would result in increasing both the terms to an extent where $\Upsilon$ will no longer be able to maintain the consistency level of $\upsilon$. 

\begin{proof}
 We give here a sketch of the proof. The analysis of $R_1$ and $R_2$ builds on analytical characterizations of the following two key quantities: the expected number $\E[K]$ of CM iterations and the convergence rate of CM outputs $\{\x^{(k)}\}_{k=1}^K$. They are given in the following two lemmas.  

\begin{lemma}
	Let $\upsilon$ be a $p$-consistent policy for some $p \in (0,1]$ and $\tau$ its order-optimal termination rule. Under $\Upsilon \left(\{\epsilon_0 \gamma^k\}, \upsilon, \tau\right)$, we have $\E[K]\sim O(\log T)$ for all $f \in \cF_{\alpha, \beta}$. \label{lemma_k_lb}
\end{lemma}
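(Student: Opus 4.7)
\emph{Proof plan.} The strategy is to exploit the geometric growth of expected iteration lengths. By Definition~\ref{def:tau_optimal} (applied to the one-dimensional restrictions of $f$, which remain in $\cF_{\alpha,\beta}$), the conditional expected length of iteration $k$ satisfies $\E[\tau(\epsilon_k)\mid\mathcal{F}_{k-1}]=\Theta(\epsilon_k^{-1/p})=\Theta(\rho^{k})$ uniformly in the starting configuration, where $\rho:=\gamma^{-1/p}>1$ and $\mathcal{F}_{k-1}$ encodes the outcome of the first $k-1$ CM iterations. Any geometric series with common ratio $\rho>1$ is dominated by its last term, so the budget constraint $S_{K-1}:=\sum_{k=1}^{K-1}\tau(\epsilon_k)<T$ from the stopping rule of Algorithm~1 heuristically forces $\rho^{K-1}=O(T)$, giving $K=O(\log_\rho T)=O(\log T)$.

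To upgrade this heuristic to a bound on $\E[K]$, I would proceed as follows. Write $\E[K]=\sum_{k\ge 1}\Pr(K\ge k)$ and pick a threshold $K_0=O(\log T)$ chosen so that the expected cumulative time after $K_0$ iterations already exceeds $T$ by a fixed constant factor. The first $K_0$ terms of the sum contribute at most $K_0$. For $k>K_0$, exploit the key measurability fact $\{K\ge k\}=\{S_{k-1}<T\}\in\mathcal{F}_{k-1}$ together with the tower property to obtain a lower bound of the form $\E[S_{K-1}]\ge c\sum_{k\ge 1}\rho^{k}\Pr(K\ge k)$. Combined with the deterministic inequality $S_{K-1}<T$ (hence $\E[S_{K-1}]\le T$), a Markov/geometric-tail argument gives $\Pr(K\ge k)\le C\rho^{-(k-K_0)}$ for $k>K_0$, and this tail sums to $O(1)$. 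Therefore $\E[K]\le K_0+O(1)=O(\log T)$.

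The main obstacle is the measurability bookkeeping around the random stopping time $K$: the indicator $\1\{K\ge k\}$ is $\mathcal{F}_{k-1}$-measurable (convenient), whereas $\1\{K>k\}=\1\{S_k<T\}$ depends on $\tau(\epsilon_k)$ itself and is not. Care must therefore be taken to apply the tower property always against $\{K\ge k\}$, and to invoke the uniform-in-starting-point form of Definition~\ref{def:tau_optimal} so that $\E[\tau(\epsilon_k)\mid\mathcal{F}_{k-1}]$ factors out cleanly from the indicator. Once this structural fact is in place, the remainder is elementary arithmetic with the common ratio $\rho=\gamma^{-1/p}>1$ granted by the hypothesis $\gamma\in[(1-\alpha/(d\beta))^{1/2},1)$.
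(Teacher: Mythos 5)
Your proposal is correct and follows essentially the same route as the paper: both extract $\E[K]=O(\log T)$ from the budget constraint $T\ge\sum_{k=1}^{K-1}\tau(\epsilon_k)$ combined with the order-optimality lower bound $\E[\tau(\epsilon_k)]\ge c\,\gamma^{-k}$ and the observation that the resulting geometric series is dominated by its last term. The only cosmetic difference is the final step: the paper bounds $\E[\gamma^{-K}]\le O(T)$ and applies Jensen's inequality to the logarithm, whereas you sum Markov-type tail bounds on $\Pr(K\ge k)$ beyond a threshold $K_0=O(\log T)$; both yield the same conclusion.
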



\begin{lemma}
	Let $\{\x^{(k)}\}$ be the sequence of CM outputs generated under $\Upsilon \left(\{\epsilon_0 \gamma^k\}, \upsilon, \tau\right)$ for a function $f \in \cF_{\alpha, \beta}$. Then the sequence of points $\{\x^{(k)}\}$ satisfy $\E[f(\x^{(k)}) - f(\x^*)] \leq F_0 \gamma^k$ for all $k \geq 0$ and for all $\displaystyle \gamma \in [ (1 - \alpha/(d \beta))^{1/2}, 1) $ where $\displaystyle F_0 = \max \left\{ f(\x_0) - f(\x^*), {\epsilon_0}/{ (1 - \gamma) } \right\}$ \label{lemma_CM}.
\end{lemma}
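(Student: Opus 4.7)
The plan is to establish the bound by induction on $k$, with $h_k := \E[f(\x^{(k)}) - f(\x^*)]$ as the driving quantity. The base case $h_0 \leq F_0$ is immediate from $F_0 \geq f(\x_0) - f(\x^*)$. For the inductive step, I would condition on the history $\cF_{k-1}$ through the end of iteration $k-1$ together with the coordinate choice $i_k$, perform the analysis under this conditioning, and then integrate out both sources of randomness.

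Given $(\cF_{k-1}, i_k)$, iteration $k$ amounts to running the low-dimensional routine $\upsilon$ with termination rule $\tau(\epsilon_k)$ on the one-dimensional restriction $x \mapsto f(x, \x^{(k-1)}_{-i_k})$. As noted earlier in the paper, this restriction again lies in $\cF_{\alpha,\beta}$, so the order-optimality of $\tau$ (Definition~\ref{def:tau_optimal}) delivers
\begin{equation*}
    \E\bigl[f(\x^{(k)}) \bigm| \cF_{k-1}, i_k\bigr] - f(\x^*_{(i_k, \x^{(k-1)})}) \leq \epsilon_k = \epsilon_0\, \gamma^k.
\end{equation*}
Averaging over the uniformly random $i_k$, I would then invoke the standard randomized coordinate-minimization contraction for composite $\alpha$-strongly convex, $\beta$-smooth objectives with a separable non-smooth term,
\begin{equation*}
    \E_{i_k}\bigl[f(\x^*_{(i_k, \x^{(k-1)})}) - f(\x^*) \bigm| \cF_{k-1}\bigr] \leq \Bigl(1 - \tfrac{\alpha}{d\beta}\Bigr)\bigl(f(\x^{(k-1)}) - f(\x^*)\bigr),
\end{equation*}
which I would justify by noting that exact one-dimensional minimization dominates a single proximal gradient step along coordinate $i_k$ with step size $1/\beta$, applying the classical proximal descent inequality, and using strong convexity of $\psi$ to lower bound the squared composite gradient by the suboptimality gap.

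Combining the two bounds and taking total expectation gives the recursion
\begin{equation*}
    h_k \leq \epsilon_0\, \gamma^k + \Bigl(1 - \tfrac{\alpha}{d\beta}\Bigr)\, h_{k-1} \leq \epsilon_0\, \gamma^k + \gamma^2\, h_{k-1},
\end{equation*}
where the second inequality uses $\gamma \geq (1 - \alpha/(d\beta))^{1/2}$. The inductive hypothesis $h_{k-1} \leq F_0 \gamma^{k-1}$ then yields $h_k \leq \gamma^k(\epsilon_0 + \gamma F_0)$, and the right-hand side is at most $F_0 \gamma^k$ exactly when $\epsilon_0 \leq (1-\gamma) F_0$---a condition baked into the definition $F_0 = \max\{f(\x_0) - f(\x^*),\, \epsilon_0/(1-\gamma)\}$, which closes the induction.

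The main obstacle I expect is justifying the coordinate-minimization contraction in the composite non-smooth regime. While the analogous result is classical for randomized proximal coordinate descent, one must verify that exact one-dimensional minimization is at least as good as one proximal gradient step in this composite setting, and confirm that the separability of $\phi$ is exactly what keeps the per-coordinate analysis decoupled so the factor $1/d$ emerges cleanly from the uniform average over $i_k$. A secondary concern is the interaction between the random stopping time $\tau(\epsilon_k)$ and the expectation manipulations, but this is handled by conditioning on $(\cF_{k-1}, i_k)$, after which $\tau(\epsilon_k)$ is simply a stopping time adapted to the one-dimensional process generated by $\upsilon$ and the termination guarantee applies verbatim.
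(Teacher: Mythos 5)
Your proposal is correct and follows essentially the same route as the paper: condition on the previous iterate, use the $\epsilon_k$-precision guarantee of the termination rule to reduce to the exact per-coordinate minimizer, dominate that minimizer by a single proximal/coordinate-descent step with step size $1/\beta$, exploit the separability of $\phi$ and the proximal PL inequality to obtain the contraction factor $1 - \alpha/(d\beta)$, and close the induction via the recursion $h_k \leq \gamma^2 h_{k-1} + \epsilon_0\gamma^k$ together with the definition of $F_0$. The paper packages the contraction step through the quantity $\mathcal{D}_\phi$ and the proximal PL inequality of Karimi et al., which is exactly the ``standard randomized coordinate-minimization contraction'' you invoke, so the two arguments coincide in substance.
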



$R_1$ is bounded using the consistency level of the routine $\upsilon$ augmented with the termination rule $\tau$. $R_2$ is bounded using Lemma~\ref{lemma_CM} and the expected time taken in each CM iteration. On plugging in the value of $\epsilon_k$, both terms end up being of the same order and we arrive at the theorem.
%
%
The detailed proofs of the lemmas and the theorem can be found in Appendix A. 
\end{proof}

\section{Termination Rules}
\label{sec:termination_conditions}

In this section, we illustrate the construction of order-optimal termination rules for two representative and fundamentally different low-dimensional routines, one classical, one recent.  For simplicity, we focus on smooth objective functions. All notations are for a specific coordinate with coordinate index omitted.
\subsection{SGD} 
\label{sub:sgd}
 For a given initial point $x_1$, SGD proceeds by generating the following sequence of query points 
 \begin{equation}
 x_{t + 1} = \proj_{\cX}(x_t - \eta_t G(x_t, \xi_t)),    
 \end{equation}
 where $G(x_t, \xi_t)$ is the random gradient observed at $x_t$, $\{\eta_t\}_{t\ge 1}$ is the sequence of step sizes, and $\proj_{\cX}$ denotes the projection operator onto the convex set $\cX$ (restricted to the chosen coordinate with other coordinates fixed). 
The following lemma establishes the efficiency of the SGD routine with properly chosen hyperparameters and the order optimality of the termination rule. Based on Theorem~\ref{thm: PCM_p_consistent}, we can then conclude that  
the resulting PCM algorithm with $\{ \epsilon_k\} = \epsilon_0 \gamma^k$  is an efficient algorithm with a regret of $O(\log T)$.
\begin{lemma}
Consider the low-dimensional routine of SGD with step sizes given by $\eta_t = \mu/(1 + \nu t)$ with $\mu = \dfrac{\mu_0 \alpha}{2 g_{\max}^2}$ and $\nu = \dfrac{\mu_0 \alpha^2}{4 g_{\max}^2}$, where $g_{\max}$ is an upper bound on the second moment of the random gradient, $G(x, \xi)$, for all $x \in \cX$ and $\mu_0$ a properly chosen hyperparameter. Then SGD with the above chosen parameters is an efficient algorithm as defined in~\eqref{eq:eff_algo_def}. The termination rule given by $\tau(\epsilon) = \left \lceil \dfrac{2\beta g_{\max}^2}{\alpha^2 \epsilon} \right \rceil$ is order optimal as defined in Definition~\ref{def:tau_optimal}.  \label{lemma_SGD}
\end{lemma}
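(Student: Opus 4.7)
The plan is to prove the two claims (efficiency of SGD with these step sizes, and order optimality of $\tau$) via the standard strongly-convex SGD analysis, then choose constants so the termination bound matches the stated closed form.

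I would start from the SGD update and use non-expansiveness of $\proj_{\cX}$ together with the unbiasedness $\E[G(x_t,\xi_t)\mid x_t] = g(x_t)$ and the second-moment bound $\E[G(x_t,\xi_t)^2]\le g_{\max}^2$ to derive the one-step recursion
\begin{equation*}
\E\!\left[(x_{t+1}-x^*)^2\right] \le \E\!\left[(x_t-x^*)^2\right] - 2\eta_t\,\E\!\left[g(x_t)(x_t-x^*)\right] + \eta_t^{\,2}\, g_{\max}^2.
\end{equation*}
Strong convexity of the one-dimensional restriction then yields $g(x_t)(x_t-x^*)\ge \alpha (x_t-x^*)^2$, so
\begin{equation*}
\E\!\left[(x_{t+1}-x^*)^2\right] \le (1-2\alpha\eta_t)\,\E\!\left[(x_t-x^*)^2\right] + \eta_t^{\,2}\, g_{\max}^2.
\end{equation*}

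Next I would substitute $\eta_t = \mu/(1+\nu t)$ with the specified $\mu,\nu$ and prove by induction that $\E[(x_t-x^*)^2] \le D/(1+\nu t)$, where $D$ can be taken as $\max\{(1+\nu)(x_1-x^*)^2,\ \mu^2 g_{\max}^2/(2\alpha\mu-\nu)\}$. The choice $\nu = \alpha\mu/2$ makes $2\alpha\mu-\nu=3\alpha\mu/2>0$, which is exactly why the step sizes are picked this way; this is the arithmetic step I expect to demand the most care, since the final constant $2\beta g_{\max}^2/\alpha^2$ has to drop out cleanly from $\beta D/(2\nu)$ once one uses $\nu = \mu_0\alpha^2/(4g_{\max}^2)$.

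Converting to the function gap is then a one-line use of $\beta$-smoothness: since the global minimum $x^*$ satisfies the first-order optimality condition on $\cX$,
\begin{equation*}
f(x_t)-f(x^*) \le \tfrac{\beta}{2}(x_t-x^*)^2,
\end{equation*}
so $\E[f(x_t)-f(x^*)]\le \beta D/(2(1+\nu t))$. Bounding $(x_1-x^*)^2 \le (2/\alpha)(f(x_1)-f(x^*))$ via strong convexity lets me absorb the initial-condition dependence into the factor $f(x_1)-f(x^*)$ in \eqref{eq:eff_algo_def}, yielding efficiency with $\lambda=1$ (after checking that the constant term in $D$ is dominated for $f(x_1)-f(x^*)$ of any positive size by tuning $\mu_0$, which is the role of the ``properly chosen hyperparameter'').

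For the termination rule, I would simply invert the bound: solving $\beta D/(2(1+\nu t))\le \epsilon$ gives a deterministic threshold of order $1/(\nu\epsilon)$; plugging in $\nu=\mu_0\alpha^2/(4g_{\max}^2)$ and choosing $\mu_0$ so that the leading constant equals $2\beta g_{\max}^2/\alpha^2$ recovers exactly $\tau(\epsilon)=\lceil 2\beta g_{\max}^2/(\alpha^2\epsilon)\rceil$, and the $\E[f(x_{\tau(\epsilon)})]-f(x^*)\le\epsilon$ guarantee required by Definition~\ref{def:tau_optimal} holds by construction. Order optimality is then immediate: $\tau(\epsilon)$ is deterministic and $\tau(\epsilon)\sim \Theta(\epsilon^{-1})=\Theta(\epsilon^{-1/p})$ with $p=1$, matching the consistency exponent of SGD established in the first part. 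The main obstacle I foresee is not conceptual but bookkeeping: verifying that the particular $\mu,\nu$ in the statement are in the regime where the induction step for the $1/t$ rate closes, and that the initial-condition constant $D$ can be written as $(f(x_1)-f(x^*))^\lambda$ times a universal constant so the $\sim$ in \eqref{eq:eff_algo_def} is literal and not just a big-$O$.
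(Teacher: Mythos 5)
Your proposal is correct and follows essentially the same route as the paper's Appendix B: the non-expansive projection plus strong-convexity one-step recursion, the induction showing $\E[(x_t-x^*)^2]\le \mu_0/(1+\nu t)$ (the paper's constant is just $\mu_0$ with the base-case requirement $\mu_0\ge\E[(x_1-x^*)^2]$, which plays the role of your $D$), smoothness to convert to the function gap, strong convexity to express $\mu_0$ in terms of $f(x_1)-f(x^*)$ for efficiency with $\lambda=1$, and inversion of the bound to get $\tau(\epsilon)=\lceil 2\beta g_{\max}^2/(\alpha^2\epsilon)\rceil$ with the $\mu_0$-dependence cancelling exactly as you anticipate. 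No gaps.
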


\begin{proof}
We give here a sketch of the proof. Details can be found in Appendix B. 
The order-optimality of the termination rule follows immediately from definition~\ref{def:tau_optimal}. For implementation in PCM, the constant $\mu_0$ is set to $\mu_0 \sim \Theta (\gamma^k)$ in iteration $k$ to ensure the adaptivity to the initial point. Using smoothness of $f$, we obtain $\displaystyle \E[f(x_t) - f(x^*)] \leq \beta \E[ |x_t - x^*|^2 ] \leq  \mu_0/(1 + \nu t)$, implying that SGD is a consistent algorithm with $p=1$. The choice of $\mu_0$ makes it an efficient algorithm with $\lambda = 1$.  
\end{proof}

\subsection{RWT} 
\label{sub:rwgd}

RWT (Random Walk on a Tree) proposed in \cite{Vakili2019a} is restricted to one-dimensional problems. There does not appear to be simple extension of RWT to high-dimensional problems. We show here that PCM offers a possibility and preserves its efficiency. \\

Without loss of generality, assume that the one-dimensional domain is the closed interval $[0,1]$. The basic idea of RWT is to construct an infinite-depth binary tree based on successive partitions of the interval. Each node of the tree represents a sub-interval with nodes at the same level giving an equal-length partition of $[0,1]$. The query point at each time is then generated based on a biased random walk on the interval tree that initiates at the root and is biased toward the node containing the minimizer $x^*$ (equivalently, the node/interval that sees a sign change in the gradient). When the random walk reaches a node, the two end points along with the middle point of the corresponding interval are queried in serial to determine, with a required confidence level $\bp$, the sign of $g(x)$ at those points. The test on the sign of $g(x)$ at any given $x$ is done through a confidence-bound based local sequential test using random gradient observations. The outcomes of the sign tests at the three points of the interval determines the next move of the random walk: to the child that contains a sign change or back to the parent under inconsistent test outcomes. For one-dimensional problems, the biased walk on the tree continues until $T$. \\

To extend RWT to high-dimensional setups within the PCM framework, we propose the following termination rule. Specifically, we leverage the structure of the local confidence-bound based sequential test in RWT. Note that the precision condition required at termination can be translated to an upper bound on the magnitude of the gradient. Since the local test is designed to estimate the sign of the gradient, it naturally requires more samples as the gradient gets closer to zero (i.e., the signal strength reduces while the noise persists). We hence impose the following termination rule: the current CM iteration terminates  once a sequential test draws more than $\displaystyle N_0(\epsilon) =  \frac{40 \sigma_0^2}{\alpha \epsilon} \log \left( \frac{2}{\bp} \log \left( \frac{80 \sigma_0^2}{\alpha \bp \epsilon} \right)\right)$ samples, where $\sigma_0^2 \geq \max_{i} \sigma_i^2$. \\

This threshold is so designed that when the number of samples in the sequential test exceeds that value, the gradient at that point is sufficiently small with high probability, leading to the required precision. It is interesting to note that the termination rule for SGD given in Lemma~\ref{lemma_SGD} is an open-loop design with pre-fixed termination time, while the termination rule proposed for RWT is closed-loop and adapts to random observations.  \\

The following lemma gives the 
regret order of the high-dimensional extension of RWT within the PCM framework. The detailed proof of the lemma is given in Appendix C.

\begin{lemma}
    The PCM-RWT algorithm with the chosen termination rule has a regret order of $O(\log T (\log\log T)^2)$.
\end{lemma}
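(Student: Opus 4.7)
The plan is to mirror the proof of Theorem~\ref{thm: PCM_p_consistent}, threading through additional poly-$\log\log$ factors that arise from (i) the $\log\log(1/\epsilon)$ overhead in the proposed threshold $N_0(\epsilon)$, and (ii) the $(\log\log)^2$ overhead in the intrinsic one-dimensional regret of RWT established in \cite{Vakili2019a,Vakili2019b}.

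First I would verify that the closed-loop termination rule meets the two ingredients needed to reuse the regret decomposition of Theorem~\ref{thm: PCM_p_consistent}: $\epsilon$-precision at termination and an expected stopping time of $\tilde{O}(\epsilon^{-1})$. Exploiting the confidence-interval structure of the local sequential sign test embedded in RWT---which is calibrated with confidence $\bp$ and has expected sample count scaling as $\sigma_0^2/g(x)^2$ up to a $\log\log$ factor---I would argue contrapositively: whenever a test collects $N_0(\epsilon)$ samples without declaring a sign, then with high probability $g(x)^2 \lesssim \alpha\epsilon$, so $\alpha$-strong convexity of the coordinate-wise restriction yields $f(x)-f(x^*) \le \epsilon$. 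A union bound over the $O(\log(1/\epsilon))$ sign tests performed within one CM iteration, with a suitably deflated $\bp$, controls the aggregate failure probability. The expected stopping time $\E[\tau(\epsilon)] = O(\epsilon^{-1}\log\log(1/\epsilon))$ then follows by summing expected sample counts over the biased-walk node visits, with the deepest level dominating the sum.

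Second, I would reuse the split $R = R_1 + R_2$ from the proof of Theorem~\ref{thm: PCM_p_consistent}. Lemma~\ref{lemma_CM} applies verbatim since the precision schedule $\epsilon_k = \epsilon_0\gamma^k$ is identical, and a mild extension of Lemma~\ref{lemma_k_lb}, absorbing the $\log\log$ correction into the telescoping identity $T=\sum_k \tau(\epsilon_k)$, still gives $\E[K] = O(\log T)$. Bounding $R_2$ is then direct: iteration $k$ contributes at most $\E[\tau(\epsilon_k)]\cdot F_0\gamma^{k-1} = O(\log k)$, yielding $R_2 = O(\log T \cdot \log\log T)$. For $R_1$, I would invoke the one-dimensional regret bound $O(\log N (\log\log N)^2)$ of RWT from \cite{Vakili2019a,Vakili2019b} on each CM iteration's sub-problem, which inherits $\alpha$-strong convexity and $\beta$-smoothness.

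The main obstacle will be the $R_1$ bound: a naive sum over $K = O(\log T)$ fresh starts of RWT contributes $K$ copies of $O(\log\tau(\epsilon_K)(\log\log \tau(\epsilon_K))^2)$ and would yield $O(\log^2 T(\log\log T)^2)$, off by a factor of $\log T$ from the claim. Sharpening this requires exploiting that $\tau(\epsilon_k)$ grows geometrically in $k$, so the single final iteration dominates the RWT sample budget, combined with the initial-gap-adaptive behavior of RWT (with one-dim initial gap at iteration $k$ bounded by $F_0\gamma^{k-1}$ via Lemma~\ref{lemma_CM}), which makes earlier iterations contribute geometrically decreasing regret. Once $R_1 = O(\log T(\log\log T)^2)$ is established, combining with $R_2$ gives the claimed total regret of $O(\log T (\log\log T)^2)$.
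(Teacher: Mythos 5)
Your proposal follows essentially the same route as the paper's proof: the same $R_1+R_2$ decomposition, reuse of Lemmas~\ref{lemma_k_lb} and~\ref{lemma_CM}, a contrapositive argument that a sign test exceeding $N_0(\epsilon)$ samples certifies $|g(x)|^2\lesssim \alpha\epsilon$ (hence $\epsilon$-precision by strong convexity), an $O(\epsilon^{-1}\cdot\mathrm{poly}\log\log(1/\epsilon))$ bound on $\E[\tau(\epsilon)]$ from the biased-walk analysis, and the decisive observation that the per-iteration RWT regret scales with $\log(\mathrm{gap}_k/\epsilon_k)$, which stays $O(1)$ because Lemma~\ref{lemma_CM} gives $\E[f(\x^{(k-1)})-f(\x^*)]\le F_0\gamma^{k-1}$ while $\epsilon_k=\epsilon_0\gamma^k$. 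Two minor points: the paper sidesteps your union bound over sign tests by directly bounding $\E[g^2(x_{\tau(\epsilon)})]$ at the single returned point via a tail sum; and your phrase that earlier iterations contribute ``geometrically decreasing regret'' is not quite the right mechanism---with gap-adaptivity each iteration contributes roughly a constant $O((\log\log T)^2)$, and the $\log T$ factor in the final bound comes from the $O(\log T)$ iteration count, not from a dominant last term. Neither point affects the correctness of your plan or its conclusion.
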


\section{Discussions}

\paragraph{Parallel and Distributed Computing} 
One of the major advantages of CD/CM based methods is their amenability to parallel and distributed computing. PCM naturally inherits the amenability of CM-based methods to parallel and distributed computing. Advantages of CD/CM methods in parallel and distributed computing have been well studied in the literature \cite{Richtarik2016a, Richtarik2016b, Bradley2011, Peng2013, Liu2015, Marecek2015,  Richtarik2016c}. It has been shown that the convergence of coordinate-based methods with parallel updates is guaranteed only when the parallel updates are aggregated in such a way that the combined update leads to a decrease in the function value as compared to the previous iterate. Such a condition is possible to verify and enforce when the objective function is deterministic and known, but presents a challenge in stochastic optimization. \\

To achieve parallelization of PCM that maintains the $p$-consistency as in the serial implementation, we draw inspiration from the technique proposed in \cite{Ferris1994} that leverages the convexity of the objective function.  \\

Assume there are $m < d$ independent cores, connected in parallel to a main server. The current iterate is passed to all the cores, with each receiving a different coordinate index, chosen at random. After the one dimensional optimization completes at each core, the next query point is set to the average of points returned by all the cores. A decrease in the function value  at the averaged point compared to the initial point is guaranteed by the convexity of the function. It can be shown that with a parallel implementation of PCM,  the ``effective'' dimension of the problem is reduced from $d$ to $d/m$ (see details in Appendix D).

\paragraph{Heavy-Tailed Noise} 

We have thus far focused on sub-Gaussian noise in gradient estimates. Extensions to heavy-tailed noise are straightforward. Similar to the case with sub-Gaussian noise, PCM, with the same optimal precision control as specified in Theorem~\ref{thm: PCM_p_consistent}, preserves the $p$-consistency and regret order of the low-dimension routine $\upsilon$ under heavy-tailed noise. In particular, for heavy-tailed noise with a finite $b^{\text{th}}$  moment ($b \in (1,2)$), both SGD and RWT are $(2 - 2/b)$-consistent and offer an optimal regret order of $O(T^{2/b - 1})$ (up to poly-$\log T$ factors)~\cite{Zhang2019,Vakili2019b}. The optimal order is preserved by PCM with them as CM routines.

\section{Empirical Results}
\label{sec:simulations}

In this section, we compare the regret performance of PCM employing SGD with that of SCD (based on its online version developed in \cite{Wang2014}). 
We consider the problem of binary classification on the MNIST dataset~\cite{LeCunn1998} as a one-vs-rest classification problem in an online setting. We use regularized hinge loss as the loss function. At each time $t$, $\xi_t = (Y_t, Z_t)$ is drawn uniformly at random from the dataset. Then using the current classifier $\x_t$, we incur a random loss 
\begin{equation}
F(\x_t, \xi_t) = \max \{ 0, 1 - Z_t \ip{\x_t, Y_t} \} + \frac{\alpha}{2} \| \x_t\|^2
\end{equation}
and observe the partial gradient given as
\begin{align}
    G_{i_t}(\x_t, \xi_t) = -Z_t (Y_t)_{i_t} \1\{ 1 - Z_t \ip{\x_t, Y_t} > 0 \} + \alpha (\x_t)_{i_t}
\end{align}
where $i_t$ denotes the index of the coordinate chosen at time $t$.
 Both algorithms are randomly initialised with a point in $[-0.5, 0.5]^d$ and run for $T = 1000d$, where $d = 785$ is the dimensionality of the problem. The regularization parameter is set to $\alpha = 1.2 \times 10^{-2}$. The regret plotted is averaged over $10$ Monte Carlo runs. The SCD algorithm is run with stepsize $\eta_t = 5/\lceil t/10000\rceil$. The PCM algorithm is implemented using a SGD as the local optimization routine with $\gamma = 0.99999$, $\epsilon_0 = 0.1$ and step size of $\eta = 0.2$. The step size in each iteration was reduced by a factor of $\gamma$. The termination rule was set to $\lceil 1/2\epsilon_k\rceil$. The parameters in both algorithms were optimized based on a grid search. The results obtained for various digits are shown in Figure~\ref{Fig:PCM_SCD}. 
\begin{figure}[!htb]
\begin{subfigure}
     \centering
    \includegraphics[scale = 0.3]{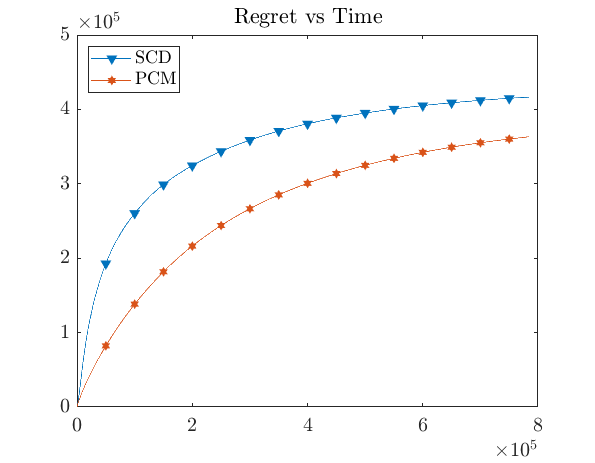}
\end{subfigure}
~
\begin{subfigure}
     \centering
    \includegraphics[scale = 0.3]{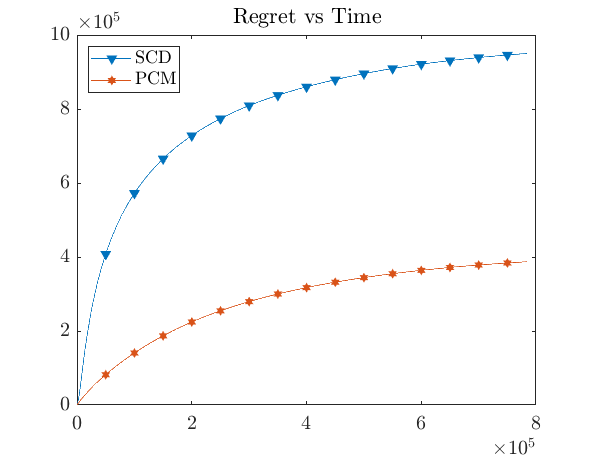}
\end{subfigure}
~
\begin{subfigure}
     \centering
    \includegraphics[scale = 0.3]{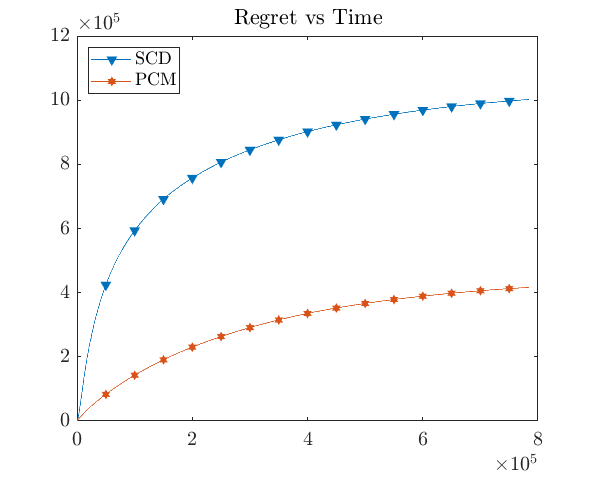}
\end{subfigure}
~
\begin{subfigure}
     \centering
    \includegraphics[scale = 0.3]{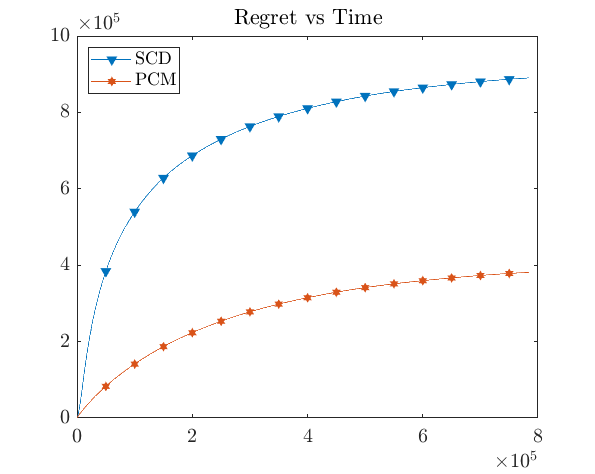}
\end{subfigure}
~
\begin{subfigure}
     \centering
    \includegraphics[scale = 0.3]{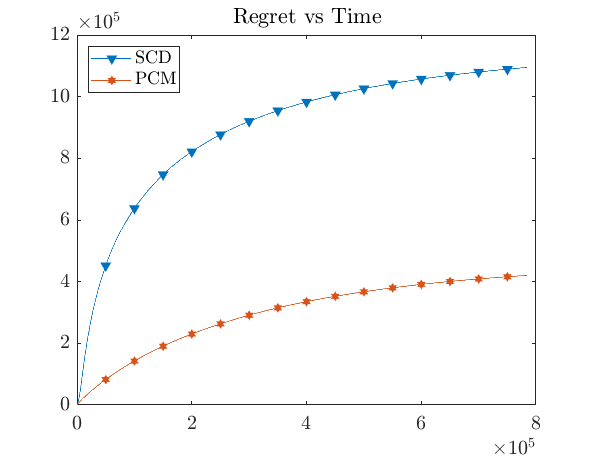}
\end{subfigure}
~
\begin{subfigure}
     \centering
    \includegraphics[scale = 0.3]{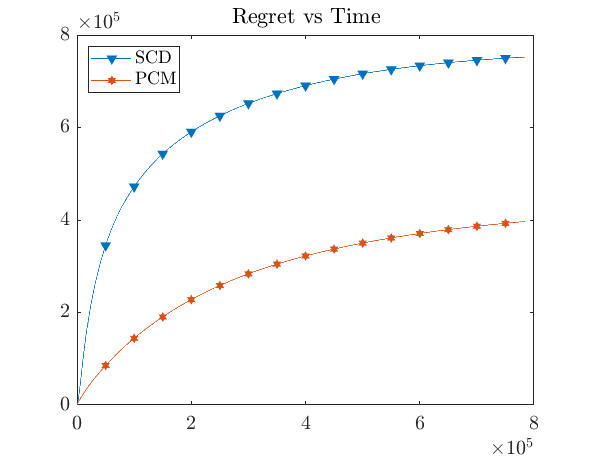}
\end{subfigure}
~
\begin{subfigure}
     \centering
    \includegraphics[scale = 0.3]{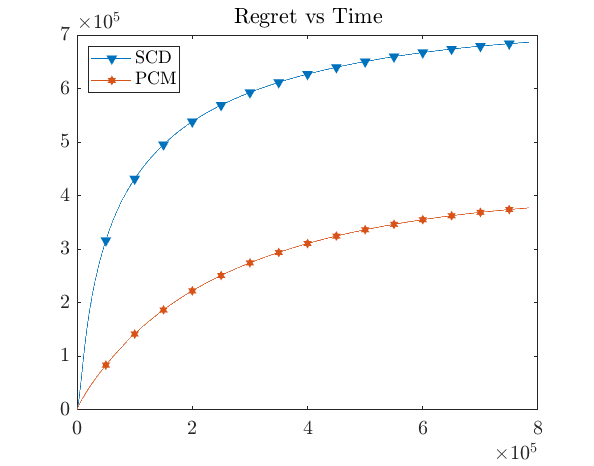}
\end{subfigure}
~
\begin{subfigure}
     \centering
    \includegraphics[scale = 0.3]{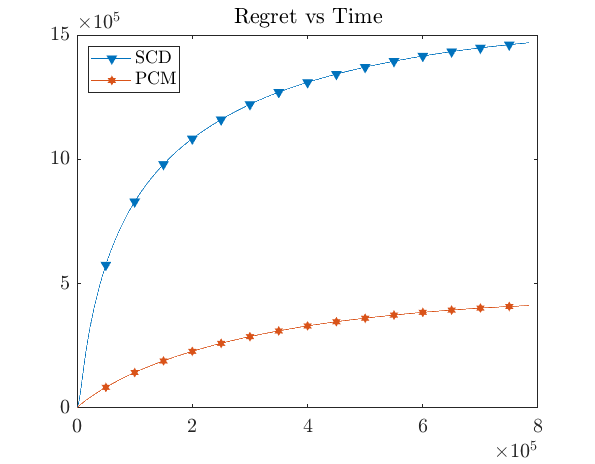}
\end{subfigure}
~
\begin{subfigure}
     \centering
    \includegraphics[scale = 0.3]{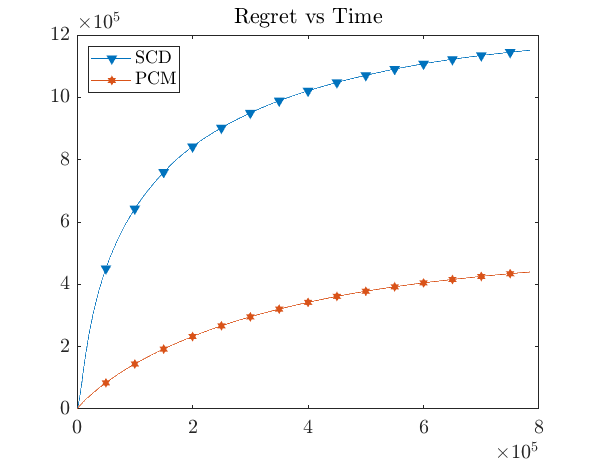}
\end{subfigure}
    \caption{Performance comparison of PCM against SCD for different digits in MNIST dataset. The digits are in ascending order when the matrix is traversed along rows. }
    \label{Fig:PCM_SCD}
\end{figure}
It is evident from the plots in Figure~\ref{Fig:PCM_SCD} that PCM has a significantly better performance. It suggests that the advantages of CM over CD type methods also hold in stochastic optimization.

\bibliography{citations}
\bibliographystyle{IEEEtran}

\newpage

\section*{Appendix A: Proof for Theorem 1}

We first give a proof for Theorem~\ref{thm: PCM_p_consistent} using the two lemmas. Proofs for the lemmas then follow.  \\

We arrive at Theorem~\ref{thm: PCM_p_consistent} by bounding separately the two terms $R_1$ and $R_2$ in the regret decomposition in~\eqref{eq: regret_decomposition} for an arbitrary objective function $f\in\cF_{\alpha,\beta}$. Note that the consistency/efficiency level $p$ of an algorithm as defined in~\eqref{eq:p_consistency_def} and~\eqref{eq:eff_algo_def} is with respect to the worst-case objective function. This implies that when a $p$-consistent low-dimensional algorithm is employed for CM, the convergence rates along different coordinates may vary, depending on the reduction of $f$ to the specific coordinate. Let $p_k \geq p$ be the convergence rate in the coordinate $i_k$ chosen in the $k$-th iteration given  $\x_{-i_k}^{(k-1)}$. More specifically, the error with respect to the local minimum  $\x^*_{i_k,  \x^{(k-1)}}$ in the $i_k$-th coordinate decays as follows.  
\begin{equation}
\left(\E[f(x_{i_k,T},\x_{-i_k}^{(k-1)})] - f(\x^*_{i_k,  \x^{(k-1)}})\right) \sim \Theta(T^{-p_k}),
\end{equation}
where $x_{i_k,T}$ denotes the one-dimensional query point at time $T$. \\

We start by bounding $R_2$. 
\begin{align}
    R_2 &\leq \E \left[ \sum_{k = 1}^K \sum_{t = t_{k-1}+1}^{t_k} \left[ F(\x^*_{(i_k, \x^{(k-1)})}, \xi_t) - F(\x^*, \xi_t) \right]   \right] \nonumber \\
    & \leq \E \left[ \sum_{k = 1}^K \sum_{s = 1}^{\tau(\epsilon_k)} \left[ F(\x^*_{(i_k, \x^{(k-1)})}, \xi_s) - F(\x^*, \xi_s) \right]  \right] \nonumber \\
    & \leq \E \left[ \sum_{k = 1}^K \E \left[ \sum_{s = 1}^{\tau(\epsilon_k)} \left[ F(\x^*_{(i_k, \x^{(k-1)})}, \xi_s) - F(\x^*, \xi_s) \right]  \bigg| \x^{(k-1)} \right]  \right] \nonumber\\
    & \leq \E \left[ \sum_{k = 1}^K [f(\x^*_{(i_k, \x^{(k-1)}}) - f(\x^*)] \E[\tau(\epsilon_k)]  \right] \nonumber \\
    & \leq \E \left[ \sum_{k = 1}^K [f(\x^{(k-1)}) - f(\x^*)] \E[\tau(\epsilon_k)]  \right] \nonumber \\
    & \leq \E \left[ \sum_{k = 1}^K c_2 \gamma^k \epsilon_k^{-1/p_k}  \right] \nonumber\\
    & \leq \E \left[ \sum_{k = 1}^K  c_2' \left( \frac{1}{\gamma} \right)^{k \frac{1 - p_k}{p_k}} \right] \label{eq:R1_R2_common}
\end{align}
where the fourth line follows from Wald's Identity and $c_2, c_2' > 0$ are constants independent of $T$. To upper bound the expression obtained in~\eqref{eq:R1_R2_common}, we use that the total number of samples taken would be upper bounded by the length of the horizon.
\begin{align}
    \sum_{k = 1}^K   \E \left[ \tau(\epsilon_k) \right] \leq T.
\end{align}
Therefore, for some constant $c_3 > 0$ and independent of $T$, we have, 
\begin{align}
    \sum_{k = 1}^K   \left( \frac{1}{\gamma} \right)^{ \frac{k}{p_k}} \leq c_3 T.
\end{align}
Now using Jensen's inequality, we can write,
\begin{align}
    \frac{1}{K} \sum_{k = 1}^K   \left( \frac{1}{\gamma} \right)^{k \frac{1-p}{p_k}} & \leq \left( \frac{1}{K}  \sum_{k = 1}^K   \left( \frac{1}{\gamma} \right)^{ \frac{k}{p_k}}\right)^{1-p} \nonumber, \\
    \implies \sum_{k = 1}^K   \left( \frac{1}{\gamma} \right)^{k \frac{1-p}{p_k}} & \leq c_3' T^{1-p} K^p.
\end{align}
for some constant $c_3' > 0$. Note that the expression here is similar to the one obtained in~\eqref{eq:R1_R2_common} and in fact can be used to upper bound $R_2$. 
\begin{align}
    R_2 & \leq   c_2' \E \left[ \sum_{k = 1}^K  \left( \frac{1}{\gamma} \right)^{k \frac{1 - p_k}{p_k}} \right]  \nonumber  \\
     & \leq   c_2' \E \left[ \sum_{k = 1}^K  \left( \frac{1}{\gamma} \right)^{k \frac{1 - p}{p_k}} \right]  \nonumber  \\
     & \leq   c_2'' \E \left[ T^{1-p} K^p \right]  \nonumber  \\
     & \leq   c_2''  T^{1-p} \E\left[  K \right]^{p}  \nonumber  
\end{align}
where $c_2'' > 0$ is a constant independent of $T$ and the second step is obtained by noting $p_k \geq p$. Using the result from Lemma~\ref{lemma_k_lb} and plugging it in the above equation, we can conclude that $R_2$ is $O(T^{1-p}\log^p T)$. Note that for $p = 1$ this boils down to $O(\log T)$ as required.  \\

We now consider $R_1$. 
\begin{align}
    R_{1} & = \E \left[ \sum_{k = 1}^K   \sum_{t = t_{k-1}+1}^{t_k} \left[ F(\x_{t}, \xi_t) - F(\x^*_{(i_k, \x^{(k-1)})},\xi_t) \right] \right] \nonumber\\
     & = \E \left[ \sum_{k = 1}^K   \sum_{s = 1}^{\tau(\epsilon_k)} \left[ F(\x_{s + t_{k-1}}, \xi_{s + t_{k-1}}) - F(\x^*_{(i_k, \x^{(k-1)})}, \xi_{s + t_{k-1}}) \right] \right] \nonumber \\
     & =  \E \left[ \sum_{k = 1}^K  \E \left[ \sum_{s = 1}^{\tau(\epsilon_k)} \left[ F(\x_{s + t_{k-1}}, \xi_{s + t_{k-1}}) - F(\x^*_{(i_k, \x^{(k-1)})}, \xi_{s + t_{k-1}})   \right] \bigg| \tau(\epsilon_k) \right] \right] \nonumber 
\end{align}
Next we upper bound the above term separately for $p$-consistent and efficient routines. For $p$-consistent ($p < 1$) routines, we have, 
\begin{align}
    R_{1} & \leq \E \left[ \sum_{k = 1}^K  \E \left[ \sum_{s = 1}^{\tau(\epsilon_k)} \frac{c_1}{s^{p_k}} \bigg| \tau(\epsilon_k) \right] \right] \nonumber \\
     & \leq \E \left[ \sum_{k = 1}^K  c_1 \E \left[ (\tau(\epsilon_k))^{1 - p_k} \right] \right] \nonumber \\
     & \leq \E \left[ \sum_{k = 1}^K  c_1 \E \left[ \tau(\epsilon_k) \right]^{1 - p_k} \right] \label{eq:Jensen1} \\
     & \leq \E \left[ \sum_{k = 1}^K  c_1' \epsilon_k^{\frac{p_k-1}{p_k}} \right] \nonumber \\
     & \leq \E \left[ \sum_{k = 1}^K  c_1'' \left( \frac{1}{\gamma} \right)^{k \frac{1 - p_k}{p_k}} \right] \label{eq:R2_in_R1} 
\end{align}
where $c_1, c_1', c_1'' > 0$ are all constants independent of $T$ and~\eqref{eq:Jensen1} follows from Jensen's inequality. Note that the term obtained in~\eqref{eq:R2_in_R1} is of the same order as the one obtained in~\eqref{eq:R1_R2_common}. Therefore, using the same analysis as in the case of $R_2$, we can conclude that $R_1$ is also $O(T^{1-p}\log^p T)$ for $p$-consistent ($p < 1$) routines. Now for efficient routines we have $p_k = 1$ for all $k$. Along with the efficiency in leveraging the favorable initial conditions, we have
\begin{align}
    R_{1} & \leq \E \left[ \sum_{k = 1}^K  \E \left[ \left(f(\x^{(k-1)} - f(\x^*_{(i_k, \x^{(k-1)})})\right)^{\lambda} \sum_{s = 1}^{\tau(\epsilon_k)} \frac{b_2}{s} \bigg| \tau(\epsilon_k) \right] \right] \nonumber \\
     & \leq \E \left[ \sum_{k = 1}^K   b_2' \gamma^{(k-1)\lambda} \E \left[\log(\tau(\epsilon_k)) \right] \right] \nonumber \\
     & \leq \E \left[ \sum_{k = 1}^K   b_2'' (\epsilon_0\gamma^{k})^{\lambda}  \log \left(\E [\tau(\epsilon_k)]  \right) \right] \label{eq:Jensen2} \\
     & \leq \E \left[ \sum_{k = 1}^K   b_2'' \epsilon_k^{\lambda}  \log \left( \frac{b_3}{\epsilon_k} \right) \right] \nonumber \\
     & \leq \E \left[ \sum_{k = 1}^K   b_2'' \left( \epsilon_k^{\lambda}  \log \left( \frac{1}{\epsilon_k} \right) + \log(b_3) \epsilon_k^{\lambda} \right) \right] \nonumber \\
     & \leq \E \left[ \sum_{k = 1}^K   b_2'' \left( \frac{1}{\lambda e} + \log(b_3) \epsilon_0^{\lambda} \right) \right] \label{eq:lambda_upper_bound} \\
     & \leq b_4 \E[K]
\end{align}
where $b_2, b_2', b_2'', b_3, b_4 > 0$ are constants independent of $T$ and~\eqref{eq:Jensen2} and~\eqref{eq:lambda_upper_bound} are respectively obtained by Jensen's inequality and the fact that $ - x^{\lambda} \log(x)$ is uniformly upper bounded by $(\lambda e)^{-1}$ for all $x > 0$ and for all $\lambda > 0$. Using the upper bound on $\E[K]$ given from Lemma~\ref{lemma_k_lb} leads to the $O(\log T)$ order of $R_1$ for efficient algorithms. Combining the above bounds on $R_1$ and $R_2$, we arrive at the theorem.

\subsection*{Proof of Lemma 1}
Note that the first $K - 1$ iterations are complete by the end of the horizon of length $T$. We thus have, for some constants $b_1, b_1' > 0$,
	\begin{align}
	    T & \geq \E\left[\sum_{k = 1}^{K - 1} \E[\tau(\epsilon_k)] \right]  \nonumber \\
	    & \geq  \E\left[\sum_{k = 1}^{K - 1} b_1 \epsilon_k^{-1/p_k} \right]  \nonumber \\
	     & \geq  \E\left[\sum_{k = 1}^{K - 1} b_1 \epsilon_k^{-1} \right]  \nonumber \\
	     & \geq  \E\left[\sum_{k = 1}^{K - 1} b_1' \gamma^{-k} \right]  \nonumber \\
	     & \geq  \E\left[ b_1' \frac{\gamma^{-K} - \gamma^{-1}}{\gamma^{-1} - 1} \right]   \\
	\end{align}
	Therefore, we have that $\displaystyle \E \left[ \left( \frac{1}{\gamma}\right)^K \right] \leq \frac{T (1 - \gamma^{-1})}{b_1'} + \gamma^{-1}$. Taking logarithms on both sides and then applying Jensen's inequality, we obtain $\displaystyle \E[K] \leq \log_{\gamma^{-1}} \left(  \frac{T (1 - \gamma^{-1})}{b_1'} + \gamma^{-1}\right)$ as required.

\subsection*{Proof of Lemma 2}

The main idea of the proof revolves around the use of proximal operators which is similar to the convergence analysis in~\cite{Karimi2016}.  Specifically, for $f = \psi + \phi$, define
\begin{equation}
    \mathcal{D}_{\phi}(\x, \rho) := -2\rho \min_{\y \in \cX} \left[ \langle\nabla \psi(\x), \y -\x \rangle + \frac{\rho}{2}\| \y - \x\|^2 + \phi(\y) - \phi(\x)\right].
\end{equation}
Let us assume that we take a step of length $z_{i_k}$ along a fixed coordinate $i_k$. Therefore, using smoothness of $\nabla \psi$ and separability of $\phi$, we can write,
\begin{equation}
    f( \x^{(k-1)} + z_{i_k}\e_{i_k}) \leq f(\x) + z_{i_k}  [\nabla \psi(\x)]_{i_k} + \frac{\beta}{2}{z_{i_k}^2} + \phi_{i_k}(x_{i_k} + z_{i_k}) - \phi_{i_k}(x_{i_k}).
\end{equation}
Let $z_{i_k}$ be such that 
\begin{equation}
    z_{i_k} = \argmin_{t} \left[  t [\nabla \psi(\x)]_{i_k} + \frac{\beta}{2}{t^2} + \phi_{i_k}(x_{i_k} + t) - \phi_{i_k}(x_{i_k}). \right]
\end{equation}
Using the precision guaranteed by the termination rule and conditioning on $\x^{(k-1)}$, we have
\begin{align}
    \E[F(\x^k)| \x^{(k-1)}] & \leq f(\x^*_{(i_k, \x^{(k-1)})}) + \epsilon_k \nonumber \\
    & \leq f( \x^{(k-1)} + z_{i_k}\e_{i_k})  + \epsilon_k
\end{align}
Taking expectation over the coordinate index $i_k$, which is uniformly distributed over the set $\{1,2,\dots, d\}$, we can write,
\begin{align}
    \E[F(\x^k)| \x^{(k-1)}] & \leq \E_{i_k}[f( \x^{(k-1)} + z_{i_k}\e_{i_k})] + \epsilon_k,   \nonumber \\
    & \leq \E_{i_k}\left[f(\x^{(k-1)}) + z_{i_k}  [\nabla \psi(\x^{(k-1)})]_{i_k} + \frac{\beta}{2}{z_{i_k}^2}{2} + \phi_{i_k}(x^{(k-1)}_{i_k} + z_{i_k}) - \phi_{i_k}(x^{(k-1)}_{i_k}) \right]  +\epsilon_k,  \nonumber  \\
    & \leq f(\x^{(k-1)}) + \frac{1}{d} \sum_{i = 1}^d \min_{t_i} \left[ t_i  [\nabla \psi(\x^{(k-1)})]_{i} + \frac{\beta}{2}{t_i^2}{2} + \phi_{i}(x^{(k-1)}_{i} + t_i) - \phi_{i}(x^{(k-1)}_{i}) \right] + \epsilon_k,  \nonumber \\
    & \leq f(\x^{(k-1)}) + \frac{1}{d} \min_{t_1, t_2, \dots, t_d} \left[ \sum_{i = 1}^d  t_i  [\nabla \psi(\x^{(k-1)})]_{i} + \frac{\beta}{2}{t_i^2}{2} + \phi_{i}(x^{(k-1)}_{i} + t_i) - \phi_{i}(x^{(k-1)}_{i}) \right] + \epsilon_k, \nonumber   \\
    & \leq f(\x^{(k-1)}) + \frac{1}{d} \min_{\y} \left[ \langle F_1(\x^{(k-1)}), \y - \x^{(k-1)} \rangle + \frac{\beta}{2}\| \y - \x^{(k-1)} \| + \phi(\y) - \phi(x^{(k-1)}) \right] + \epsilon_k, \nonumber \\
    & \leq f(\x^{(k-1)}) - \frac{1}{2d\beta}  \mathcal{D}_g(\x^{(k-1)}, \beta) + \epsilon_k,  \nonumber \\
    & \leq f(\x^{(k-1)}) - \frac{\alpha}{d\beta}  ( f(\x^{(k-1)}) - f(\x^*) ) + \epsilon_k  \label{eq:det_CD_step}
\end{align}
where the step uses the proximal PL inequality for strongly convex functions described in~\cite{Karimi2016}. \\

Taking expectation over $\x^{(k-1)}$, we obtain,
\begin{equation}
 \E[F(\x^k)] - f(\x^*)  \leq  \left( \E[F ( \x^{(k-1)})] - f(\x^*) \right) \left( 1 - \frac{\alpha}{d\beta} \right) +   \epsilon_0 \gamma^{k}
\end{equation}

Let $\phi_k = \E[F(\x^k)] - f(\x^*)$. We claim that $\phi_k \leq F_0 \gamma^{k}$ where $\displaystyle F_0 = \max \left\{ f(\x^{(0)}) - f(\x^*),  \frac{\epsilon_0}{ (1 - \gamma) }  \right\}$. This can be proved using induction. For the base case, we have $\phi_0 = f(\x^{(0)}) - f(\x^*) \leq F_0$ by definition. Assume it is true for $k - 1$, then we have,
\begin{align}
    \phi_k & \leq \left( 1 - \frac{\alpha}{d\beta} \right) \phi_{k-1} +   \epsilon_0 \gamma^{k} \nonumber \\
    & \leq \gamma^2 (F_0 \gamma^{k-1}) + \epsilon_0 \gamma^{k} \nonumber \\
    & \leq F_0 \gamma^{k+1} +  \epsilon_0 \gamma^{k }\nonumber \\
    & \leq  \gamma^k \left( F_0 \gamma + \epsilon_0 \right) \nonumber\\
    & \leq F_0 \gamma^k.
\end{align}
The last step follows from the choice of $F_0$. This completes the proof.

\newpage

\section*{Appendix B: Proof of Lemma~\ref{lemma_SGD}}

We first prove the efficiency of SGD followed by the order optimality of the termination rule.
%
%

\subsection*{Efficiency of the SGD Routine}

Consider a one-dimensional stochastic function $F(x)$ with stochastic gradient given by $G(x)$. Let $x^*$ be the minimizer of the function, i.e., $x^* = \argmin_{x \in \cX} f(x)$ where $f(x) = \E[F(x)]$ and $\cX$ is the domain of the function. 
The iterates generated by SGD with initial point $x_0$ satisfy the following relation,
\begin{align}
    \E \left[ \| x_{t+1} - x^* \|^2\right] & = \E \left[  \| \proj_{\cX}(x_t -  \eta_t G(x_t)  - x^*) \|^2 \right]  \nonumber\\
    & \leq \E \left[  \| x_t -  \eta_t G(x_t)  - x^* \|^2 \right] \nonumber \\
    & \leq \E \left[   \| x_t - x^* \|^2 - 2 \eta_t \ip{ G(x_t), x_t - x^*}  + \eta_t^2 \|G(x_t)\|^2 \right] \nonumber \\
    & \leq \E \left[   \| x_t - x^* \|^2  \right] - 2 \eta_t  \E\left[ \alpha \| x_t - x^* \|^2  \right]   + \eta_t^2 \E\left[ \|G(x_t)\|^2 \right] \nonumber \\
    & \leq (1 - 2\eta_t \alpha) \E \left[   \| x_t - x^* \|^2  \right]  + \eta_t^2 g_{\max}^2 \label{eq:SGD_general}
\end{align}
Next we show that the iterates satisfy $\displaystyle \E\left[ \| x_t - x^* \|^2 \right] \leq \frac{\mu_0}{1 +  \nu t} $ for all $t \geq 0$ based on an inductive argument. The base case is ensured by choosing $\mu_0$ satisfying $\mu_0 \geq \E[|x_0 - x^*|^2]$.  
For the induction step, note that the stepsizes are chosen as $\eta_t = \dfrac{\mu}{1 + \nu t}$ with $\mu = \dfrac{\mu_0 \alpha}{2 g_{\max}^2}$ and $\nu = \dfrac{\mu_0 \alpha^2}{4 g_{\max}^2}$. We continue with~\eqref{eq:SGD_general} as follows. 
\begin{align}
    \E \left[ \| x_{t+1} - x^* \|^2\right] & \leq (1 - 2\eta_t \alpha) \E \left[   \| x_t - x^* \|^2  \right]  + \eta_t^2 g_{\max}^2 \nonumber \\
    & \leq \left(1 - 2\frac{ \mu \alpha}{1 + \nu t} \right)  \frac{\mu_0}{1 +  \nu t}    + \frac{ \mu^2}{(1 +  \nu t)^2} g_{\max}^2 \nonumber \\
    & \leq \frac{\mu_0}{1 +  \nu (t+1)} + \left( \frac{\mu_0}{1 +  \nu t} - \frac{\mu_0}{1 +  \nu (t+1)} \right)  + \frac{ \mu}{(1 +  \nu t)^2} (\mu g_{\max}^2 - 2 \mu_0\alpha) \nonumber \\
    & \leq \frac{\mu_0}{1 +  \nu (t+1)}  + \frac{\mu_0^2 }{(1 +  \nu t)^2} (\mu^2 g_{\max}^2 - 2 \mu \mu_0 \alpha + \mu_0 \nu) \\
    & \leq \frac{\mu_0}{1 +  \nu (t+1)}  + \frac{\mu_0^2 }{(1 +  \nu t)^2} \left(\frac{\mu_0^2 \alpha^2}{4 g_{\max}^4} g_{\max}^2 -  \frac{ \mu_0^2 \alpha^2}{ g_{\max}^2} + \dfrac{\mu_0^2 \alpha^2}{4 g_{\max}^2} \right) \\
    & \leq \frac{\mu_0}{1 +  \nu (t+1)}  \label{eq:SGD_iterates}
\end{align}
Therefore, the iterates generated by SGD satisfy $\displaystyle \E\left[ \| x_t - x^* \|^2 \right] \leq \frac{\mu_0}{1 +  \nu t} $ for all $t \geq 0$. 

To ensure efficiency with respect to the initial point $x_0$, $\mu_0$ should be of the order $\mu_0 \leq C \E[|x_0 - x^*|^2]$ as $x_0$ goes to $x^*$ for some $C > 0$ (see below how this can be ensured within the PCM framework). Based on the strong convexity and smoothness of the function, the condition on the iterates can be translated to a condition on the function values as given below
\begin{align}
    \E[F(x_t) - f(x^*)] \leq \frac{\beta C}{\alpha} \frac{\E[f(x_0) - f(x^*)]}{1 +  \nu t},
\end{align}
which implies that SGD is an efficient policy with $\lambda = 1$. \\ 

For implementation in PCM, the choice of $\mu_0$ can be simplified using the relation on the CM iterates outlined in Lemma~\ref{lemma_CM}. In iteration $k$, $\x^{(k-1)}$ is the initial point, therefore, we can write, $\E\left[ \| \x^{(k-1)} - \x^*_{(i_k, x^{(k-1)})} \|^2 \right] \leq \dfrac{2}{\alpha} \E\left[ f(\x^{(k-1)}) - f(\x^*_{(i_k, x^{(k-1)})}) \right] \leq \E\left[ f(\x^{(k-1)}) - f(\x^*) \right] \leq F_0 \gamma^{k-1}$. Thus, for an appropriate choice of $\mu_0$ for the first iteration, its value for consequent iterations can be obtained by the relation $\mu_0(k) = \gamma\mu_0(k-1)$, where $\mu_0(k)$ is the value of $\mu_0$ used in iteration $k$.

\subsection*{Order Optimality of the Termination Rule}

The correctness of the termination rule follows in a straightforward manner from the relation obtained on the iterates in the previous part. Using smoothness of the function and the relation obtained in~\eqref{eq:SGD_iterates}, we have, 
 $\displaystyle \E[F(x_t) - f(x^*)] \leq \frac{\mu_0 \beta}{2(1 + \nu t)} $. Let $t_0$ be such that, $ \dfrac{\mu_0 \beta}{2(1 + \nu t_0)} \leq \epsilon$. On rearranging this equation, we obtain $\displaystyle t_0 \geq \frac{\mu_0 \beta}{2 \epsilon \nu} - \frac{1}{\nu}$. Therefore, for all $t \geq t_0$, we have $\displaystyle \E[F(x_t) - f(x^*)] \leq \epsilon$. Since our choice of termination rule satisfies the above condition, we can conclude that our termination rule ensures the required precision. The order optimality of the termination also follows directly from the expression.


\newpage

\section*{Appendix C}

In this section, we analyze the performance of the Random Walk on a Tree (RWT) under the PCM setup. We begin with briefly outlining the RWT algorithm for PCM setup followed by the termination rule and then conclude the section with the performance analysis of PCM-RWT. \\

Let $F(x, \xi)$ be the one dimensional stochastic function to be minimized and $G(x, \xi)$ denote its stochastic gradient while $f(x)$ and $g(x)$ respectively denote their expected values. Also we assume that $|g(x)| \leq g_{\max}$ for all $x \in \cX'$, where $\cX'$ is the domain of the function.


\subsection*{RWT Algorithm for PCM}

In the $k^{\text{th}}$ iteration of PCM, optimization is carried out in the along the direction $i_k$, chosen in that iteration. Therefore, the one dimensional domain is the interval given by $\{ x : (x, \x_{-i_k}^{(k-1)}) \in \cX \}$, that is, all the points in the domain whose all but the $i_k^{\text{th}}$ coordinates are same as that of $\x^{(k-1)}$. The length of this interval depends upon the diameter of the domain along the $i_k^{\text{th}}$ direction. Without loss of generality, we assume that the one-dimensional domain is the closed interval $[0,1]$ (as the extension to any interval $[a,b]$ is straightforward). \\

The basic idea of RWT is to construct an infinite-depth binary tree based on successive partitions of the interval. Each node of the tree represents a sub-interval with nodes at the same level giving an equal-length partition of $[0,1]$. The query point at each time is then generated based on a biased random walk on the interval tree that initiates at the root and is biased toward the node containing the minimizer $x^*$ (equivalently, the node/interval that sees a sign change in the gradient). When the random walk reaches a node, the two end points along with the middle point of the corresponding interval are queried in serial to determine, with a required confidence level $\bp$, the sign of $g(x)$ at those points. The test on the sign of $g(x)$ at any given $x$ is done through a confidence-bound based local sequential test using random gradient observations. The outcomes of the sign tests at the three points of the interval determines the next move of the random walk: to the child that contains a sign change or back to the parent under inconsistent test outcomes. \\

A crucial aspect of the above algorithm is the local sequential test. Let the sample mean of $s$ samples of the stochastic gradient at a point $x \in \cX'$ be denoted as $\displaystyle \bar{G}_{s}(x) = \frac{1}{s} \sum_{t = 1}^{s} G(x, \xi_t)$. The sequential test in RWT for sub-Gaussian noise is given below. For heavy-tailed noise, the only required change to RWT is in the confidence bounds used in the sequential test (see~\cite{Vakili2019b}). 

\begin{figure}[H]
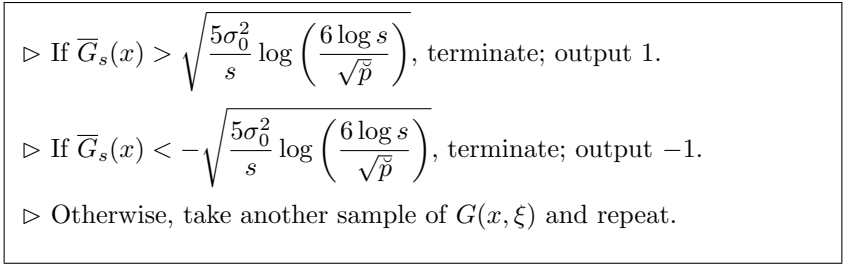

\begin{center}
\noindent\fbox{
\parbox{4.25in}{
{
$\rhd$ If $\displaystyle \overline{G}_s(x) > \sqrt{\frac{5 \sigma_0^2}{ s} \log\left(\frac{6 \log s}{\sqrt{ \bp}}\right)}$, terminate; output $1$.\\[0.5em]
$\rhd$ If $\displaystyle \overline{G}_s(x) < -\sqrt{\frac{5 \sigma_0^2}{ s} \log\left(\frac{6\log s}{\sqrt{ \bp}}\right)}$, terminate; output $-1$.\\[0.5em]
$\rhd$ Otherwise, take another sample of $G(x, \xi)$ and repeat.\\
}
}}
\caption{The sequential test at a sampling point $x$ under sub-Gaussian noise.}\label{Fig:STest-subG}
\end{center}
\end{figure}
where $\bp$ is the confidence parameter for the sequential test. To ensure the bias in the random walk, $\bp$ is set to a value in $(0, 1 - 2^{-1/3})$. \\

The RWT algorithm as described in~\cite{Vakili2019a, Vakili2019b} initializes the random walk at the root of the tree as there is no prior information about the location of the minimizer. However, if we have some prior information about the location of the minimizer, we can initialize the random walk at a lower level in the tree. This enables us to give higher preference to the region where the minimizer is likely to be located, thereby reducing the expected time to convergence. Consequently, such an initialization allows RWT to leverage favorable initial conditions. Therefore, for PCM, we initialize RWT at the node which contains the initial point and is at a level where the interval length is lesser than $\displaystyle \sqrt{\log_2 \left( \frac{\beta \sqrt{2}}{\sqrt{\alpha \epsilon}} \right) \frac{2 \mu_0}{\alpha}}$, where $\mu_0$ is a carefully chosen hyperparameter and $\epsilon$ is the required precision.  If the threshold exceeds $1$, then we begin at the root. The significance of this choice of initialization and the allowed values of $\mu_0$ are discussed in a later section which outlines an upper bound on $\E[\tau(\epsilon)]$.


\subsection*{Termination Rule}

We begin with a lemma that states the correctness of the termination rule and also relate the expected second moment of the gradient of the final point to the required precision $\epsilon$. Recall that the termination rule specified that if at a certain point the number of samples taken in a sequential test exceeds $N_0(\epsilon) = \frac{40 \sigma_0^2}{\alpha \epsilon} \log \left( \frac{2}{\bp} \log \left( \frac{80 \sigma_0^2}{\alpha \bp \epsilon} \right)\right)$ the algorithm must terminate, returning the current point being probed.

\begin{lemma}
Let $x_{\tau(\epsilon)}$ denote the final point obtained under the termination rule. Then we have the following relations
$\E[f(x_{\tau(\epsilon)}) - f(x^*)] \leq \epsilon$ and $\E[g^2(x_{\tau(\epsilon)})] \leq 2 \alpha \epsilon$.
\end{lemma}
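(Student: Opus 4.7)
The plan is to combine the structure of the sequential test with a sub-Gaussian concentration inequality to show that, at termination, the true one-dimensional gradient $g(x_{\tau(\epsilon)})$ is small in the $L^2$ sense, and then apply the Polyak-Lojasiewicz (PL) inequality $f(x) - f(x^*) \leq |g(x)|^2/(2\alpha)$, which is a direct consequence of $\alpha$-strong convexity of $f$. By linearity of expectation, PL gives $\E[f(x_{\tau(\epsilon)}) - f(x^*)] \leq \E[g^2(x_{\tau(\epsilon)})]/(2\alpha)$, so the second bound of the lemma implies the first and the real content is the gradient estimate.

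First I would observe that by the termination rule, at the point $x_{\tau(\epsilon)}$ the sequential test has drawn exactly $N_0(\epsilon)$ samples without having fired either non-negative or non-positive output. Hence, by the two non-firing branches of the test,
\[
|\overline{G}_{N_0}(x_{\tau(\epsilon)})| \;\leq\; \sqrt{\frac{5\sigma_0^2}{N_0}\log\!\left(\frac{6\log N_0}{\sqrt{\bp}}\right)}.
\]
Next I would invoke the sub-Gaussian law-of-iterated-logarithm concentration bound against which the test is calibrated (the same bound underlying the correctness of RWT in the prior work): with probability at least $1-\bp$, simultaneously for every $s \geq 1$,
\[
|\overline{G}_s(x) - g(x)| \;\leq\; \sqrt{\frac{5\sigma_0^2}{s}\log\!\left(\frac{6\log s}{\sqrt{\bp}}\right)}.
\]
Applying the triangle inequality at $s = N_0$ on this good event yields the pointwise bound $|g(x_{\tau(\epsilon)})|^2 \leq \tfrac{20\sigma_0^2}{N_0}\log(6\log N_0/\sqrt{\bp})$.

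I would then substitute the closed-form expression for $N_0(\epsilon)$; the prefactor $40$ and the inner constant $80$ are chosen precisely so that, after the doubly-nested logarithms cancel up to constant factors, the right-hand side is dominated by $2\alpha\epsilon$ on the good event. To pass to expectation, I would split $\E[g^2(x_{\tau(\epsilon)})]$ into the contributions from the good event (at most $2\alpha\epsilon$) and from its complement, of probability at most $\bp$, on which I fall back on the deterministic envelope $|g(x)| \leq g_{\max}$. Since $\bp \in (0, 1 - 2^{-1/3})$ is a design constant, the slack built into the constant $40$ in $N_0$ is enough to absorb the residual $g_{\max}^2 \bp$ into the overall $2\alpha\epsilon$ bound, completing the gradient estimate; PL then delivers $\E[f(x_{\tau(\epsilon)}) - f(x^*)] \leq \epsilon$.

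The main obstacle is essentially constant-tracking: one must verify that the prefactor $40$ in $N_0$ is chosen large enough that, with the iterated-log structure of $N_0$ plugged back into the right-hand side, the inequality $\tfrac{20\sigma_0^2}{N_0}\log(6\log N_0/\sqrt{\bp}) \leq 2\alpha\epsilon$ holds uniformly in $\epsilon$ after absorbing the $g_{\max}^2\bp$ correction. A secondary subtlety is that $x_{\tau(\epsilon)}$ is a data-dependent stopping point produced by the biased random walk on the interval tree, so the concentration inequality must be used in its anytime (LIL) form rather than as a fixed-sample Hoeffding bound; this is exactly why the test is stated with a $\log\log s$ scaling, and consequently no union bound over tree nodes is required beyond the single $\bp$ failure probability already built into the test.
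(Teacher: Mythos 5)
Your overall architecture matches the paper's: derive the function-value bound from the gradient bound via strong convexity ($f(x)-f(x^*)\le g^2(x)/(2\alpha)$), and establish the gradient bound by arguing that a sequential test which survives $N_0(\epsilon)$ samples without firing can only do so at a point where $|g|$ is small. The pointwise part of your argument (empirical mean trapped in the band, plus concentration) is sound. The genuine gap is in how you pass to expectation. You split on a single good event of probability $1-\bp$ and bound the bad-event contribution by $g_{\max}^2\,\bp$. But $\bp$ is a fixed design constant in $(0,1-2^{-1/3})$, so $g_{\max}^2\bp$ is a constant independent of $\epsilon$, while the target bound $2\alpha\epsilon$ tends to zero (the lemma must hold along the whole schedule $\epsilon_k=\epsilon_0\gamma^k\to 0$). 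No choice of the prefactor $40$ in $N_0(\epsilon)$ can absorb a constant into an $O(\epsilon)$ quantity, so the step ``the slack built into the constant $40$ \ldots is enough to absorb the residual $g_{\max}^2\bp$'' fails for all sufficiently small $\epsilon$.

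The fix is the one the paper uses: rather than a single constant-probability concentration event, bound the tail of the test duration as a function of the gradient magnitude,
\begin{equation*}
\mathbb{P}[\hat{T}>N_0]\;\le\;\exp\!\left(-\frac{N_0}{2\sigma_0^2}\Bigl(|g(x)|-\tfrac{\rho}{2}\Bigr)^2\right)
\qquad\text{with } \rho=\sqrt{\alpha\epsilon/2},
\end{equation*}
so that the probability the termination rule fires at a point with $|g(x)|>r\rho$ is at most $\delta_0^{(2r-1)^2}$ with $\delta_0=\exp(-s_0(\rho)\rho^2/(8\sigma_0^2))$, which is both small and rapidly decaying in $r$. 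A layered (peeling) sum $\E[g^2(x_{\tau(\epsilon)})]\le\rho^2+\sum_{r\ge1}(r+1)^2\rho^2\,\delta_0^{(2r-1)^2}\le 4\rho^2=2\alpha\epsilon$ then controls the bad-event contribution at scale $\rho^2$ rather than at scale $g_{\max}^2$. In short: the failure probability you charge must itself scale with how large the gradient is (and hence with $\epsilon$ through $N_0$), not be the fixed $\bp$ baked into the test.
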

\begin{proof}
The first part of the lemma directly follows from the second part using the strong convexity of the function. Since $f$ is strongly convex, we have
$\displaystyle \E[f(x_{\tau(\epsilon)}) - f(x^*)] \leq \frac{1}{2 \alpha}\E[g^2(x_{\tau(\epsilon)})] \leq \epsilon$ as required. Hence, we just focus on the proving the bound on the gradient. \\ 

To obtain the bound on the gradient, we leverage the primary idea underlying the design of the threshold in the termination rule. The threshold is designed to ensure that the gradient at the point at which the algorithm terminates is sufficiently small with high probability. We use this high probability bound to obtain the required bound on the second moment of the gradient. \\

Define $\rho := \dfrac{\alpha \epsilon}{2}$. We claim that under the given termination rule, $|g(x_{\tau(\epsilon)}| \leq \rho$ holds with high probability. To prove the claim, we consider the probability that the random number of samples taken in a sequential test, denoted by $\hat{T}$, exceed any number $n$. For any point with $g(x) > 0$, we have, 
\begin{align}
    \mathbb{P}[\hat{T}> n] & \leq \mathbb{P} \left[\forall s \leq n: \overline{G}_s(x) + \sqrt{\frac{5 \sigma_0^2}{ s} \log\left(\frac{6 \log s}{\sqrt{ \bp}}\right)} >0,~\text{and}~\overline{G}_{s}-\sqrt{\frac{5 \sigma_0^2}{ s} \log\left(\frac{6 \log s}{\sqrt{ \bp}}\right)} <0  \right], \nonumber \\ 
    & \leq \mathbb{P} \left[\forall s \leq n: \overline{G}_{s}-\sqrt{\frac{5 \sigma_0^2}{ s} \log\left(\frac{6 \log s}{\sqrt{ \bp}}\right)} <0  \right], \nonumber \\ 
     & \leq \mathbb{P} \left[ \overline{G}_{n}-\sqrt{\frac{5 \sigma_0^2}{ n} \log\left(\frac{6 \log n}{\sqrt{ \bp}}\right)}<0  \right],  \nonumber \\
     & \leq \mathbb{P} \left[ \overline{G}_{n}- \E_{\xi}[G(x, \xi)]  < \sqrt{\frac{5 \sigma_0^2}{ n} \log\left(\frac{6 \log n}{\sqrt{ \bp}}\right)} - g(x)] \right], \nonumber  \\
     &\leq \exp \left( -\frac{n}{2 \sigma_0^2} \left( \sqrt{\frac{5 \sigma_0^2}{ n} \log\left(\frac{6 \log n}{\sqrt{ \bp}}\right)} - g(x) \right)^2 \right). \label{eq:tail_prob_rwt}
\end{align}
The threshold $N_0(\epsilon)$ can be equivalently written in terms of $\rho$ as $s_0(\rho) = \dfrac{20 \sigma_0^2}{\rho^2} \log \left( \dfrac{2}{\bp} \log \left( \dfrac{40\sigma_0^2}{\bp \rho^2} \right)\right)$. For all $n > s_0(\rho)$, we have, 
\begin{align}
    \mathbb{P}[\hat{T}> n] & \leq \exp \left( -\frac{n}{2 \sigma_0^2} \left( \frac{\rho}{2} - g(x) \right)^2 \right) \label{eq:tail_prob_thresh}
\end{align}
This can be obtained by plugging $n = s_0(\rho)$ in the upper bound in~\eqref{eq:tail_prob_rwt}. A more detailed analysis of this step can be found in Appendix B in~\cite{Vakili2019a}. A similar analysis can be carried out for any point with $g(x) < 0$. Using~\eqref{eq:tail_prob_thresh}, we can conclude that if the number of samples in a local test at a point $x$ exceed $s_0(\rho)$, then $|g(x)| \leq \rho$ with probability at least $1 - \delta_0$ where $\displaystyle \delta_0 = \exp \left(-\frac{s_0(\rho) \rho^2}{8 \sigma_0^2} \right)$. 

We can now bound the second moment of the gradient as follows by noting that $\delta_0 \leq 1/2$
\begin{align}
    \E[g^2(x_{\tau(\epsilon)})] & \leq \rho^2 \mathbb{P}[g(x_{\tau(\epsilon)}) \leq \rho] + \E[g^2(x_{\tau(\epsilon)}) \1_{\{g(x_{\tau(\epsilon)}) > \rho\}}], \nonumber \\
    & \leq \rho^2 + \sum_{r = 1}^{\infty} (r + 1)^2 \rho^2 \mathbb{P}( r \rho < g(x_{\tau(\epsilon)}) \leq (r + 1) \rho), \nonumber \\
    & \leq \rho^2 + \sum_{r = 1}^{\infty} (r + 1)^2 \rho^2 \mathbb{P}( g(x_{\tau(\epsilon)}) > r \rho), \nonumber \\
    & \leq \rho^2 + \sum_{r = 1}^{\infty} (r + 1)^2 \rho^2 \exp \left(-\frac{s_0(\rho)}{2 \sigma_0^2} \left( \frac{\rho}{2} - r \rho \right)^2\right), \nonumber \\
    & \leq \rho^2 + \sum_{r = 1}^{\infty} (r + 1)^2 \rho^2 \exp \left(-\frac{s_0(\rho) \rho^2}{8 \sigma_0^2} (2r - 1)^2\right), \nonumber \\
    & \leq \rho^2 + \sum_{r = 1}^{\infty} (r + 1)^2 \rho^2 \delta_0^{(2r - 1)^2},  \nonumber\\
    & \leq \rho^2 + 2.01\rho^2,  \nonumber \\
    & \leq 4 \rho^2.
\end{align}
By plugging in $\rho = \sqrt{\dfrac{\alpha \epsilon}{2}}$, we arrive at the required result.
\end{proof}

As it is easier to analyze expressions in terms of the gradient and not the function values, we will use the expressions in terms of $\rho$ for the rest of the section keeping in mind its relation with the required precision of $\epsilon$.

\subsection*{Upper Bound on $\E[\tau(\epsilon)]$}

To bound the expected number of samples taken in one iteration of the PCM-RWT algorithm with precision $\epsilon$, $\E[(\tau(\epsilon)]$, we need to obtain a bound on the number of steps taken by the random walk before termination. The bound on $\E[\tau(\epsilon)]$ follows by noting that the number of samples taken in each sequential test before termination is bounded by the threshold specified in the termination rule. For the bound on the number of steps in the random walk, we note that as the random walk gets to a deeper level in the tree, the magnitude of the gradient reduces. Consequently, the probability that the number of samples taken in the sequential test will cross the threshold increases as the walk goes to a deeper level in the tree. These decreasing tail probabilities can then be used to obtain a bound on the expected number of steps in the random walk. \\

Assume the minima to be $x^* = 0$. Such an assumption leads to no loss of generality as the analysis can easily be modified for any point in the interval and for any interval of any length. We begin the analysis for the case when the random walk is initialized at the root node. This analysis can be easily modified to accommodate the initialization at a deeper level. \\

We divide the tree into a sequence of subtrees given by $\cT_1, \cT_2, \dots$ where for all $i =1, 2, \dots $, the subtree $\cT_i$ contains the node corresponding to the interval $[0, 2^{-(i-1)}]$ and its right child along with all its children. Thus, $\cT_i$'s are half trees rooted at level $i - 1$, along with their root. This construction is similar to the one outlined in~\cite{Wang2018}. Since the random walk is biased towards the minimizer, therefore given the construction of $\cT_i$, the probability that random walk is still in one of such subtrees would decrease with time. To formalize this idea, we consider the last passage times of any subtree $\cT_{i}$. Let $\tau_1$ denote the last passage time to $\cT_1$. \\

The analysis of the last passage time of $\cT_1$ can be mapped to the problem of a random walk on the set $S = \{ -1, 0, 1, 2, \dots \}$.
The underlying idea is that each non-negative integer can be mapped to the corresponding level in subtree. Our random walk on the tree can between different levels is then equivalent to a random walk on these integers. The equivalence follows by noting that the specific intervals on any level are all identical as they do not contain the minimizer and thus can be abstracted into a single entity. Hence, we map the root node to $0$, and set of nodes at level $j$ in subtree $\cT_1$ to integer $j$ for $j > 0$. Lastly, we map the left subtree containing all nodes in the interval $[0, 0.5]$ to $-1$, which corresponds to an exit from the subtree $\cT_1$. \\

The random walk can be modelled as a Markov chain on the set $S$, where $\mathbb{P}(j \to j+1) = 1 - p$ for all $j \in S$, $\mathbb{P}(j \to j-1) = p$ for all $j \geq 0$ and $\mathbb{P}(-1 \to -1) = p$. The probability $p = \bp^3 > 0.5$ is the probability of moving in correct direction where $\bp$ is the confidence level in the sequential test. The initial state is $0$. \\

Since $-1$ denotes the state corresponding to exiting the subtree $\cT_1$, therefore our random walk still being in $\cT_1$ after $n$ steps is the same  as the Markov Chain being in a state $j$ for $j \geq 0$ after $n$ steps. Furthermore, since the Markov Chain was initialized at $0$, therefore being in state $j \geq 0$ implies that the number of steps taken in the positive direction are at least as many as those taken in the negative direction. Combining all these ideas along with noting the specific structure of the transition matrix, we can conclude that 
\begin{align}
    \mathbb{P}(\tau_1 > n) = \mathbb{P}(Z \leq n/2),
\end{align}
where $Z \sim \mathrm{Bin}(n,p)$. Writing expectation as the sum of tail probabilities, 
\begin{align}
    \E[\tau_1] & = \sum_{n = 0}^{\infty} \mathbb{P}(\tau_1 > n), \nonumber\\
    & = \sum_{n = 0}^{\infty} \mathbb{P}(Z \leq n/2), \nonumber\\
    & = \sum_{n = 0}^{\infty} \exp(-2(p -1/2)^2 n),\nonumber \\
    & = \frac{1}{1 - \exp(-2(p - 1/2)^2)}.
\end{align}
The third step is obtained using Hoeffding's inequality. We can leverage the symmetry of the random walk and the binary tree to obtain the expected last passage time for any other subtree $\cT_i$. \\

Let for all $i \geq 1$, $N_{\cT_i}$ denote the random number of steps taken in subtree $\cT_i$ before exiting that subtree and $E_i$ denote the event that the random walk does not terminate in tree $\cT_i$. If $N_{RW}$ denotes the random number of steps taken by the random walk before termination then
\begin{align}
    \E[N_{RW}] & = \E[N_{\cT_1}] + \sum_{i = 2}^{\infty} \mathbb{P}\left( \bigcap_{j =1}^{i-1} E_j \right) \E \left[ N_{\cT_i}\bigg|\bigcap_{j =1}^{i-1} E_j \right]. \label{eq:N_RW}
\end{align}
By definition we have $\E[N_{\cT_1}] = \E[\tau_1]$. Furthermore, one can note that due to symmetry in the structure of the binary tree, $\displaystyle  \E \left[ N_{\cT_i}\bigg|\bigcap_{j =1}^{i-1} E_j \right] = \E[\tau_1]$. Hence, to evaluate~\eqref{eq:N_RW} we need to find a bound on $\displaystyle \mathbb{P}\left( \bigcap_{j =1}^{i-1} E_j \right)$, the probability that the random walk does not terminate in $\cT_j$ for $j = 1,2, \dots, i-1$ and $i \geq 2$. To bound this probability, consider the event that local sequential test takes less than $s_0(\rho)$ samples before termination when the magnitude of the gradient of the point being sampled is less than $\rho$. Let the event be denoted by $E_{f}(\rho)$ and let $\mathbb{P}(E_f(\rho)) \leq \eta_{\rho}$ for some $\eta_{\rho} < 1$. \\

Note that for any level $i > \log_2(\beta/\rho)$,the length of the interval at this level would be lesser than $\rho/\beta$. Using the smoothness of the function, it follows that the magnitude of gradient of any point probed in $\cT_i$ for $i > \log_2(\beta/\rho)$ would be lesser than $\rho$. Therefore, for every $i > \log_2(\beta/\rho)$, if $E_i$ occurs then $E_f(\rho)$ would definitely have occurred. Consequently, for all $i > i_0$, 
\begin{align}
    \mathbb{P}\left( \bigcap_{j =1}^{i-1} E_j \right) \leq \eta_{\rho}^{i - i_0},
\end{align}
where $i_0 = \lceil\log_2(\beta/\rho)\rceil$. For $i \leq i_0$, we can crudely upper bound this probability with $1$. Plugging these relations into~\eqref{eq:N_RW}, we obtain,
\begin{align}
    \E[N_{RW}] & = \E[N_{\cT_1}] + \sum_{i = 2}^{\infty} \mathbb{P}\left( \bigcap_{j =1}^{i-1} E_j \right) \E \left[ N_{\cT_i}\bigg|\bigcap_{j =1}^{i-1} E_j \right],  \nonumber \\
    & \leq  \E[\tau_1]\left( i_0  + 1 + \sum_{i = i_0 + 1}^{\infty} \eta_{\rho}^{i - i_0} \right), \nonumber \\
     & \leq  \frac{1}{1 - \exp(-2(p - 1/2)^2)}\left(\lceil\log_2(\beta/\rho)\rceil  + 1 +   \sum_{i = 1}^{\infty} \eta_{\rho}^{i} \right), \nonumber\\
      & \leq  \frac{1}{1 - \exp(-2(p - 1/2)^2)}\left(\log_2(\beta/\rho) + 2+   \frac{\eta_{\rho}}{1 - \eta_{\rho}} \right). \label{eq:ex_samp_RWT_ub}
\end{align}

The above analysis provides an upper bound for the number of steps taken by the random walk when it it initialized at the root node. However, as mentioned previously, we would want the RWT to be initialized at a deeper level in the tree to leverage the favorable initial conditions. We can perform a similar analysis for number of steps taken by random walk in the case when RWT is initialized at a deeper level to leverage the favorable initial conditions. \\

As given in the description of PCM-RWT, we initialize the algorithm the node which contains the initial point and at a level where the interval length is lesser than $\displaystyle \sqrt{\log_2 \left( \frac{\beta}{\rho} \right) \frac{2 \mu_0}{\alpha}}$, where $\mu_0$ is a carefully chosen hyperparameter. If the threshold exceeds $1$, then we begin at the root. To analyze the number of steps taken by the random walk, we consider the event that $\displaystyle |x_0 - x^*|^2 \leq \log_2 \left( \frac{\beta}{\rho} \right) \frac{2 \mu_0}{\alpha}$, where $x_0$ is randomly chosen. We denote the event by $E_{x_0}$. Under this event, we can carry out a similar analysis as before, with a minor change that instead of $i_0$, the maximum depth would be $\displaystyle i_1 \leq \log_2 \left( \sqrt{\log_2 \left( \frac{\beta}{\rho} \right) \frac{2 \mu_0}{\alpha}} \frac{\beta}{\rho}\right) + 1$. Under the case the above event does not occur, the random walk would have to take no more than an additional $\displaystyle i_2 \leq \log_2 \left( \sqrt{\log_2 \left( \frac{\beta}{\rho} \right) \frac{2 \mu_0}{\alpha}} \right) + 1$ steps before the previous analysis is again applicable. If $N_{RW-\text{new}}$ denotes the random number of steps taken by the random walk under this initialization scheme, then on combining the above results, we can write,
\begin{align}
    \E[N_{RW-\text{new}}] & \leq \mathbb{P}(E_{x_0}) \left( \zeta_p \left( \log_2 \left( \sqrt{\log_2 \left( \frac{\beta}{\rho} \right) \frac{2 \mu_0}{\alpha}} \frac{\beta}{\rho}\right) + \hat{\eta}_{\rho}\right) \right), \nonumber \\
    & \ \ \ \ \ \ \ \ \  + \mathbb{P}(E_{x_0}^c) \left( \zeta_p \left( \log_2 \left( \sqrt{\log_2 \left( \frac{\beta}{\rho} \right) \frac{2 \mu_0}{\alpha}}\right) + \log_2 \left( \frac{\beta}{\rho}\right) + 1 +  \hat{\eta}_{\rho}\right) \right), \label{eq:N_RW_new}
\end{align}
where $E^c$ denotes the complement of an event $E$, $\zeta_p = \dfrac{1}{1 - \exp(-2(p - 1/2)^2)}$ and $\hat{\eta}_{\rho} =  2+   \dfrac{\eta_{\rho}}{1 - \eta_{\rho}}$. We can bound $\mathbb{P}(E_{x_0}^c)$ using Markov's inequality as follows,
\begin{align}
    \Pr \left( |x_0 - x^*|^2 > \log_2 \left( \frac{\beta}{\rho} \right) \frac{2 \mu_0}{\alpha} \right) & \leq \Pr \left( f(x_0) - f(x^*) > \log_2 \left( \frac{\beta}{\rho} \right) \mu_0 \right), \nonumber \\
     & \leq {\E \left[  f(x_0) - f(x^*) \right]} \left( \log_2 \left( \frac{\beta}{\rho} \right) \mu_0 \right)^{-1}. \label{eq:Markov_init}
\end{align}
Setting $\mu_0 = \E \left[  f(x_0) - f(x^*) \right]$ and plugging~\eqref{eq:Markov_init} in~\eqref{eq:N_RW_new}, we obtain,
\begin{align}
    \E[N_{RW-\text{new}}] & \leq  \left( \zeta_p \left( \frac{1}{2}\log_2 \left( \log_2 \left( \frac{\beta}{\rho} \right) \frac{2 \E \left[  f(x_0) - f(x^*) \right]}{\alpha} \frac{\beta^2}{\rho^2}\right) + \hat{\eta}_{\rho}\right) \right) \nonumber \\
    &  +  \left( \log_2 \left( \frac{\beta}{\rho} \right)  \right)^{-1}\left( \zeta_p \left( \frac{1}{2}\log_2 \left( \log_2 \left( \frac{\beta}{\rho} \right) \frac{2 \E \left[  f(x_0) - f(x^*) \right]}{\alpha} \right) +  \log_2 \left( \frac{\beta}{\rho}\right) + 1 +  \hat{\eta}_{\rho}\right) \right), \nonumber \\
    & \leq  \left( \zeta_p \left( \frac{1}{2}\log_2 \left( \log_2 \left( \frac{\beta}{\rho} \right) \frac{2 \E \left[  f(x_0) - f(x^*) \right]}{\alpha} \frac{\beta^2}{\rho^2}\right) + \hat{\eta}_{\rho}\right) \right) \nonumber \\
    &  \ \ \ \ \ \ \ \ \ \ \ +  \zeta_p \left( \frac{ \log_2(\beta/g_{\max}) + 0.5\log_2(2g_{\max}/\alpha) + \hat{\eta}_{\rho} + 1.5}{\log_2(\beta/g_{\max})}  \right).  \label{eq:N_RW_new_final}
\end{align}

As in the case of SGD, a similar analysis can be carried out for any $\mu_0 \sim \Theta(E \left[  f(x_0) - f(x^*) \right])$. Furthermore, for PCM-RWT, $\mu_0$ can be tuned for each iteration in the same manner as described for PCM-SGD, that is, by decreasing it by a factor of $\gamma$ after every iteration.
This proof can be readily extended to interval of any length $l$, by changing the value of $i_0$ to $\log_2(\beta l/ \rho)$ and also appropriately changing the bound on $\E \left[  f(x_0) - f(x^*) \right]$ in~\eqref{eq:N_RW_new_final}. For a different minimizer, the sequence of subtrees $\cT_i$'s can be appropriately modified as described in~\cite{Wang2018} to obtain the same result. \\

Finally, using~\eqref{eq:N_RW_new_final}, we can obtain the bound on $\E[\tau(\epsilon)]$. If $M_{RW}$ denotes the random number of local tests carried out before termination then $\E[M_{RW}] \leq \E[3N_{RW-\text{new}}+ 3]$. Moreover, since the number of samples in each test can be at most $s_0(\rho)$, therefore, the expected number of samples can be no more than $\E[M_{RW}]s_0(\rho)$. Substituting the different bounds and the relation between $\rho$ and $\epsilon$, we obtain that for some constant $\tau_0 > 0$, independent of $\epsilon$
\begin{align}
    \E[\tau(\epsilon)] \leq \frac{\tau_0}{\epsilon} \log \left( \frac{\E \left[  f(x_0) - f(x^*) \right]}{\epsilon} \right)  \log^2 \left( \log \left( \frac{1}{\epsilon} \right)\right).
\end{align}

\subsection*{Regret in One CM Iteration}

In this section, we perform a brief analysis of the regret incurred in one CM iteration. This corresponds to the inner sum in the term $R_1$ in the regret decomposition of PCM, capturing the regret incurred by the routine $\upsilon$ in the local one dimensional minimization.
Let $x_{(m)}$ denote the sampling point at the $m^{\text{th}}$ time the local test is called by the random walk module and $\hat{T}_{m}$ denote the random number of samples taken at this point. Therefore, if $R_{RWT}(\epsilon)$ denotes the regret incurred by RWT in one CM iteration to get to precision of $\epsilon$, then we have,
\begin{align}
	{R}_{RWT}(\epsilon) & = \E \left[ \sum_{m = 1}^{M_{RW}} \sum_{t = 1}^{\hat{T}_{m}} F(x_{(m)}; \xi_t) - F(x^*, \xi_t)\right], \nonumber \\
	 & \leq \E \left[ \sum_{m = 1}^{M_{RW}} \sum_{t = 1}^{\hat{T}_{m}} \frac{1}{2 \alpha}[g(x_{(m)})]^2 \right], \nonumber \\
	& \leq \E \left[ \sum_{m = 1}^{M_{RW}} \E[\hat{T}_m] \frac{1}{2 \alpha}[g(x_{(m)})]^2 \right]. \label{eq:R_RWT_1}
\end{align}

Note that $\hat{T}_m$ is the random number of samples taken at sampling point $x_{(m)}$ with the termination rule. If $\tilde{T}$ denotes the random number of samples taken without the termination rule then, $\hat{T}_{m} = \tilde{T} \1\{ \tilde{T} \leq s_0(\rho) \}$ where $\rho = \sqrt{\alpha \epsilon/2}$. To bound $\E[\hat{T}_{m}]$, we use different methods depending on the gradient of the sampling point. If $|g(x_{(m)}| \leq \rho$, then we use the trivial bound $\E[\hat{T}_{m}] = \E[\tilde{T} \1\{ \tilde{T} \leq s_0(\rho) \}] \leq  s_0(\rho)$. For the other case of $|g(x_{(m)}| > \rho$, we note that $\displaystyle \E[\hat{T}_{m}] \leq  \E[\tilde{T}] \leq  \frac{40 \sigma_0^2}{g(x_{(m)})^2} \log \left( \frac{2}{\bp} \log \left( \frac{40  \sigma_0^2}{\bp g(x_{(m)})^2} \right)\right) + 2  \leq  \frac{40 \sigma_0^2}{g(x_{(m)})^2} \log \left( \frac{2}{\bp} \log \left( \frac{40  \sigma_0^2}{\bp \rho^2} \right)\right) + 2$. Plugging these bounds in~\eqref{eq:R_RWT_1}, we obtain,
\begin{align}
    {R}_{RWT}(\epsilon) 	& \leq \E \left[ \sum_{m = 1}^{M_{RW}} \frac{g(x_{(m)})^2}{2 \alpha} \left(\E[\hat{T}_m]  \1\{|g(x_{(m)})| > \rho \} + \E[\hat{T}_m] \1\{|g(x_{(m)})| \leq \rho \} \right) \right], \nonumber \\
	& \leq \E \bigg[ \sum_{m = 1}^{M_{RW}} \left\{ \frac{40\sigma_0^2}{[g(x_{(m)})]^2} \log \left( \frac{2}{\bp} \log \left( \frac{40  \sigma_0^2}{\bp \rho^2} \right)\right) + 2  \right\} \frac{1}{2 \alpha}[g(x_{(m)})]^2 \1\{|\E_{\xi}[G(x_{(m)}; \xi)]| > \rho \}  \nonumber  \\ 
	& \ \ \ + \frac{20 \sigma_0^2}{\rho^2} \log \left( \frac{2}{\bp} \log \left( \frac{40  \sigma_0^2}{\bp \rho^2} \right)\right) \frac{\rho^2}{2 \alpha}  \1\{|\E_{\xi}[G(x_{(m)}; \xi)]| \leq \rho \}  \bigg],\nonumber \\
	& \leq \E \bigg[ \sum_{m = 1}^{M_{RW}} \left\{ \frac{20 \sigma_0^2}{\alpha} \log \left( \frac{2}{\bp} \log \left( \frac{40  \sigma_0^2}{\bp \rho^2} \right)\right) + \frac{g_{\max}^2}{\alpha} \right\}  \1\{|\E_{\xi}[G(x_{(m)}; \xi)]| > \rho \}\nonumber \\
	\ \ \ & +  \frac{20 \sigma_0^2}{\alpha } \log \left( \frac{2}{\bp} \log \left( \frac{40  \sigma_0^2}{\bp \rho^2}  \right)\right)  \1\{|\E_{\xi}[G(x_{(m)}; \xi)]| \leq \rho \} \bigg], \nonumber\\
	& \leq   \left( \frac{20 \sigma_0^2}{\alpha} \log \left( \frac{2}{\bp} \log \left( \frac{40  \sigma_0^2}{\bp \rho^2} \right)\right) + \frac{g_{\max}^2}{\alpha} \right)  \E[M_{RW}], \nonumber\\ 
	& \leq  \left( \frac{20 \sigma_0^2}{\alpha} \log \left( \frac{2}{\bp} \log \left( \frac{80  \sigma_0^2}{\bp \alpha \epsilon} \right)\right) + \frac{g_{\max}^2}{\alpha} \right)  \E[3N_{RW-\text{new}}+ 3].
\end{align}
Substituting the bound from~\eqref{eq:N_RW_new_final} in the above equation, we can show that for some constant $\bar{R} > 0$, independent of $\epsilon$,  
\begin{align}
    {R}_{RWT}(\epsilon) \leq \bar{R} \log \left( \frac{\E \left[  f(x_0) - f(x^*) \right]}{\epsilon} \right)  \log^2 \left( \log \left( \frac{1}{\epsilon} \right)\right).
\end{align}

\subsection*{Regret Analysis of PCM-RWT}

We can now combine all the results obtained about performance of RWT to analyze the performance of PCM-RWT. Using the decomposition of regret in $R_1$ and $R_2$, we bound each of these terms individually to obtain the bound on the overall regret. \\

We begin with bounding $R_1$. Note that we can now rewrite $R_1$ as,
\begin{align}
    R_1 & \leq  \E \left[  \sum_{k = 1}^K  R_{RWT}(\epsilon_k) \right], \nonumber \\
    & \leq  \E \left[  \sum_{k = 1}^K \bar{R} \log \left(  \frac{ \E[ f(\x^{(k-1)}) - f(\x^*)] }{\epsilon_k} \right)  \log^2 \left( \log \left( \frac{1}{\epsilon_k} \right)\right)  \right], \nonumber\\
    & \leq  \E \left[  \sum_{k = 1}^K \bar{R} \log \left( \frac{ F_0 \gamma^k }{\epsilon_0 \gamma^k} \right)  \log^2 \left( \log \left( \frac{1}{\epsilon_0} \right) + k \log \left(\frac{1}{\gamma} \right)\right)  \right], \nonumber\\
    & \leq  \E \left[  \sum_{k = 1}^K  \bar{R}'  \log^2 \left(\log \left( \frac{1}{\epsilon_0} \right) + k \log \left(\frac{1}{\gamma} \right)\right)  \right].
\end{align}
Using the result from Lemma~\ref{lemma_k_lb} along with Jensen's inequality, we conclude that $R_1$ is of the order $O(\log T \log^2(\log T))$. Similarly, we now consider $R_2$.
\begin{align}
    R_2 &\leq \E \left[ \sum_{k = 1}^K \sum_{t = t_{k-1}+1}^{t_k} \left[ F(\x^*_{(i_k, \x^{(k-1)})}, \xi_t) - F(\x^*, \xi_t) \right]   \right], \nonumber \\
    & \leq \E \left[ \sum_{k = 1}^K [f(\x^{(k-1)} - f(\x^*)] \E[\tau(\epsilon_k)] \right], \nonumber \\
    & \leq \E \left[ \sum_{k = 1}^K (F_0 \gamma^{k-1}) \frac{\tau_0}{\epsilon_k} \log \left( \frac{ \E[ f(\x^{k-1}) - f(x^*)]}{\epsilon_k} \right)  \log^2 \left( \log \left( \frac{1}{\epsilon_k} \right)\right) \right], \nonumber \\
    & \leq \E \left[ \sum_{k = 1}^K \tau_0' \log^2 \left( \log \left( \frac{1}{\epsilon_0} \right) + k \log \left(\frac{1}{\gamma} \right)\right)  \right].
\end{align}
This is similar to the term we obtained in $R_1$ implying that $R_2$ is also of the order $O(\log T \log^2(\log T))$. Combining the two, we arrive at our required result.

\newpage

\section*{Appendix D}

In this section, we briefly describe the advantages obtained using parallelization. Consider the setup of $m$ cores, connected in parallel to the main server. To make it similar to our original setup, we assume that each processor has the access to the oracle independently of others. It is assumed that $m \leq d$. The algorithm for implementing PCM using parallel updates is described as follows
\begin{enumerate}
    \item Read the current iterate $\x$ and pass it to all cores.
    \item Select $m$ different indices from $\{1,2,\dots, d \}$ uniformly at random and allocate them to the cores.
    \item On each core, run the one dimensional optimization routine along the dimension whose was index assigned to that core. The initial point for all the cores will be the same point $\x$. Let the points returned by the cores to the server be denoted as $\y_1, \y_2, \dots \y_m$.
    \item Generate the next iterate $\displaystyle \x_1 = \frac{1}{m}\sum_{k = 1}^m \y_k$.
\end{enumerate}
The last step is the update or the synchronization step which ensures that the function value at the new iterate is lesser than that at the one previous one. Note that in the second step $m$ different indices are chosen uniformly at random, that is, one of the $\binom{d}{m}$ sets is chosen. \\

The analysis of the above mentioned parallel implementation scheme is very similar to that of the sequential case.
Let $\1_{(i,j)}$ denote the indicator variable for the $i^{\text{th}}$ direction and $j^{\text{th}}$ core. It is $1$ if the $i^{\text{th}}$ direction was chosen for optimization on the $j^{\text{th}}$ core where $i = 1, 2, \dots d$ and $j = 1, 2, \dots m$. Thus, from equation~\eqref{eq:det_CD_step}, we have that for each $j = 1,2, \dots m$, $\displaystyle \E[f(\y_k)| \x] \leq  f(\x) - \frac{1}{2\beta}\sum_{i = 1}^d \1_{(i,j)} [g_i(\x)]^2 + \epsilon $, where $\epsilon$ is the required accuracy. 
Using the update scheme in the synchronization step, we have,
\begin{align*}
	m \E[f(\x_1)| \x] & = m \E\left[ f \left( \frac{1}{m} \sum_{j = 1}^m \y_j \right) \bigg| \x \right] \\
	& \leq m  \left( \frac{1}{m} \sum_{j = 1}^m \E[f(\y_j) | \x] \right) \\
	& \leq \sum_{j = 1}^m \left( f(\x) - \frac{1}{2\beta}\sum_{i = 1}^d \1_{(i,j)} [g_i(\x)]^2 + \epsilon \right) \\
	\implies \E[f(\x_1) | \x] &  \leq \frac{1}{m} \left(\sum_{j = 1}^m \left( f(\x) - \frac{1}{2\beta}\sum_{i = 1}^d \1_{(i,j)} [g_i(\x)]^2 + \epsilon \right) \right)\\
	 &  \leq  f(\x) - \frac{1}{2m\beta}\sum_{j = 1}^m  \sum_{i = 1}^d \1_{(i,j)} [g_i(\x)]^2 + \epsilon \\
\end{align*}
where the second step follows from the convexity of the function.
Now taking expectation over the random choice of coordinates, we get,
\begin{align*}
	 \E[f(\x_1) | \x] & \leq  f(\x) - \E \left[ \frac{1}{2m \beta}\sum_{j = 1}^m  \sum_{i = 1}^d \1_{(i,j)} [g_{i}(\x)]^2 \right] + \epsilon \\
	 & \leq  f(\x)  -  \frac{1}{2m \beta}\sum_{j = 1}^m  \sum_{i = 1}^d \frac{m}{d}  [g_{i}(\x)]^2   + \epsilon \\
	 & \leq  f(\x)  -  \frac{m}{2d\beta} \sum_{i = 1}^d  [g_i(\x)]^2  + \epsilon \\
	 & \leq  f(\x)  -  \frac{m}{2d\beta} \| g(\x) \|^2  + \epsilon \\
	  & \leq  f(\x)  -  \frac{m \alpha}{d\beta} \left( f(\x) - f(\x^*) \right)  + \epsilon 
\end{align*}

Note that this expression is similar to one obtained in equation~\eqref{eq:det_CD_step}. Using an analysis similar to the one in Appendix A, we can obtain convergence rates for the case of parallel updates. The reduction in dimensionality dependence is evident through the factor $d$ being replaced by $d/m$.


\end{document}